\newtheorem{lemma}{Lemma}
\newtheorem{theorem}{Theorem}
\title{Dynamic Cell Structure via Recursive-Recurrent Neural Networks}
\author[1]{Xin Qian}
\author[2]{Matthew Kennedy}
\author[3]{Diego Klabjan}
\affil[1,3]{Department of Industrial Engineering and Management Sciences, Northwestern University}
\affil[2]{Weinberg College of Arts and Sciences, Northwestern University}
\begin{document}

\maketitle

\begin{abstract}
	    \noindent In a recurrent setting, conventional approaches to neural architecture search find and fix a general model for all data samples and time steps. We propose a novel algorithm that can dynamically search for the structure of cells in a recurrent neural network model. Based on a combination of recurrent and recursive neural networks, our algorithm is able to construct customized cell structures for each data sample and time step, allowing for a more efficient architecture search than existing models. Experiments on three common datasets show that the algorithm discovers high-performance cell architectures and achieves better prediction accuracy compared to the GRU structure for language modelling and sentiment analysis. 
\end{abstract}

\section{Introduction}
	    First proposed by Hopfield \cite{hopfield1982neural}, recurrent Neural Network (RNN) models excel at machine learning tasks that involve sequential data such as natural language processing. Researchers soon noted that a major obstacle of RNN models is in backpropagation when computing gradients. Since RNNs are trained by backpropagation through time, when the recurrent structure is unfolded into a huge feed-forward network with many layers, gradients tend to grow or vanish exponentially in the same way as in very deep feed-forward neural networks \cite{pascanu2012understanding}. Many extensions of RNN models, such as Long Short-Term Memory (LSTM) \cite{hochreiter1997long} and Gated Recurrent Units (GRU) \cite{chung2014empirical}, are proposed to address this problem. These models achieve state-of-the-art results in many machine learning tasks like language modeling \cite{de2015survey} and speech recognition \cite{li2015constructing, sak2014long}.
	    
	    However, the cell structure of these hand-crafted RNN models, like LSTM and GRU, is fixed across all time steps and data samples. It is also a time-consuming and tedious effort to find a suitable cell structure through trial and error \cite{miller1995neural}. Lastly, there is no universal answer to which cell structure to use when facing different types of data and a different problem at hand. Therefore, a more flexible model that can automatically determine the cell structures based on a finite set of trainable parameters is needed to deal with more and more complicated and diversified data sources and problems. 
	    
	    There is another line of research about Recursive Neural Network (RecNN) models \cite{socher2011parsing}. A RecNN model is defined over recursive tree structures -- each node of the tree corresponds to a vector computed from its child nodes, and the information passes from the leaf nodes and internal nodes to the root node in a bottom-up manner. The model produces a structured prediction such as a tree by applying the same set of trainable parameters recursively. Derivatives of errors are computed with back-propagation over the tree structures \cite{goller1996learning}. RecNN has shown great success in learning tree structures of certain natural language processing tasks \cite{socher2013recursive} because the structures it dynamically produces are customized for each data sample. 
	    
	    We consider how to replace the cell structure in RNN models to be time-variant and sample-dependent. We note that the equations governing a cell can be represented as a computational tree where each non-leaf node corresponds to a vector that is computed from the vectors on its two child nodes. The initial multiset of vectors is composed of the current feature vector at time $t$ and all vectors produced by the previous cell (hidden state representation). If we augment this multiset with constant vectors, such as the zero vector, we can then express mathematical equations behind a cell as a tree on this multiset. RecNN is an appropriate model to capture such a tree by means of a finite set of trainable parameters. In summary, our proposed model is using RecNN in each time step as a replacement for a fixed set of equations. In this way we obtain an architecture with cells depending on time and on each individual sample. In addition to this flexibility, the approach does not require hand-crafting of cells.  
	    
        Our model shows great results on a series of language modeling and sentiment prediction tasks. In the experiments we show that RRNN is able to design sample-dependent tree structures on the Wikipedia dataset and achieves 5.5\% improvement in Bits per Character (BPC) compared to GRU. The performance on the datasets also show the advantage of dynamically designing cell structures for each sample. 
        
	    The major contribution of this paper is a novel architecture that dynamically searches for the structure of cells in an RNN. Our model, called a Recursive-Recurrent Neural Network (RRNN), recursively designs the cell structure with the help of a scoring function and allows us to build different cell structures under a fixed set of parameters. The proposed model can generate the cell structure of some traditional RNN models, like GRU and LSTM which we establish theoretically. Most importantly, the output tree structure of hidden cells in RRNN are customized based on each data sample, and therefore they are time-variant and data-dependent. Besides, we define a new tree distance metric that can measure the difference between the tree with vectors on each of its nodes. We also exhibit and prove the sufficient and necessary conditions for avoiding the gradient exploding and vanishing problem that usually appears in recurrent neural network models. While such results are known for RNNs, they have not yet been established for RecNNs. Furthermore, our result applies to RRNNs which are a combination of RNN and RecNN.
	    
	    The rest of the manuscript is structured as follows. In Section \ref{litreview} we review the literature while in Section \ref{sec:RRNNmodel} we present the RRNN model, including an algorithm to construct trees, the design of the loss function, and other extensions. Section \ref{sec:properties} presents some properties of the RRNN model. In Section \ref{sec:experiments} we introduce the data sets and discuss all experimental results. We defer the proofs of the theorems and other technical details to Appendix.
\section{Literature Review}\label{litreview}
    	A recurrently connected structure in RNN can improve the performance of a model by its ability to infer sequential dependencies \cite{lipton2015critical}. Despite their success, vanilla RNN models are still limited by the algorithms employed due to the problems of exploding or vanishing gradients that may appear in the training phase \cite{279181}. LSTM \cite{hochreiter1997long} is one of the most popular ways to address this problem. Many variants are then proposed to improve the performance of LSTM \cite{graves2005framewise, kalchbrenner2015grid}. RNN models often work well if a hand-crafted cell structure is well-designed, which requires time and expertise, and it leads to a fragile setting that works only on a particular problem or, worse, on a single dataset. This is clearly less general and less flexible than the method proposed in this paper where the cells are algorithmically designed.
    
    	Recursion is the division of a problem into subproblems of the same type and the application of an algorithm to each subproblem. It can help with augmenting neural architectures and improving the generalization ability of a model \cite{cai2017making}. RecNN greedily searches hierarchical tree structures and achieves state-of-the-art performances on tasks like semantic analysis in natural language processing and image segmentation \cite{socher2011parsing, socher2013recursive}. 
    		
    	To provide better flexibility and robustness, automatically searching a neural network architecture is thus a logical next step. Neural Architecture Search (NAS), a subfield of AutoML, is a method which algorithmically finds an architecture; it has significant overlap with hyper-parameter optimization and meta-learning \cite{elsken2018neural}. A simple approach to NAS is to build a layer-chained neural network where layers are differentiated by their choices of operations (pooling, convolution, etc.), activation functions (ReLU, Sigmoid, etc.), width, etc. \cite{chollet2017xception, yu2015multi, baker2016designing}. Despite its impressive empirical performance, NAS is computationally expensive and time consuming \cite{zoph2018learning}.
    	
    	Various methods of producing novel cell structures for RNNs have been recently proposed. \cite{zoph2016neural} introduce a reinforcement learning approach that utilizes policy gradient to search for convolutional and recurrent neural architectures. However, the reinforcement learning approach is computationally expensive in the sense that obtaining an architecture with state-of-the-art performance on CIFAR-10 and ImageNet requires 1,800 GPU days \cite{zoph2018learning}. \cite{pham2018efficient} accelerate the search process by sharing parameters among potential architectures. \cite{schrimpf2017flexible} introduce a more flexible algorithm that searches for novel RNNs of arbitrary depth and width. \cite{liu2018darts} relax the discrete architecture space by continuous probability vectors and utilize a gradient based optimization method to derive an optimal architecture. All these methods are extremely computationally demanding and they yield a fixed network architecture for all times and samples. Some exceptions are in \cite{graves2016adaptive} and \cite{zhang2018layer} where the proposed models automatically adjust the number of layers of the LSTM model based on time and sample but the cell structures are static. Our RRNN model further extends this property such that the predicted cell structures are time-variant and sample-dependent.
	
	\section{Recursive-Recurrent Neural Network Model} \label{sec:RRNNmodel}
		Generally, RNNs consist of two parts which are a hidden cell (recurrent cell) and an output layer. A single sample input of an RNN is a sequence of vectors $ \left\{x_t \in \mathbb{R}^p: t=1,2,\ldots, T\right\} $, labeled by time step $ t $. Given a hidden state $h_{t-1}$, the $ t $-th recurrent cell defines the next hidden state $ h_t $ by $h_t = f(x_t, h_{t-1})$. The output layer is usually a simple network that takes $x_t$ and $h_t$ as input and returns $q_t = g(x_t, h_t; \Gamma )$ as output. These two equations are applied for $t = 1, 2, \ldots, T$.
		
        Function $f$ defined above is time-invariant and thus remains the same in all time steps and for all samples. To address this shortcoming, we propose a new model that can dynamically design the recurrent cell structure (i.e. generate different functions $f$) with respect to the argument vectors. This is inspired by the idea of RecNNs, thus we call it the Recursive-Recurrent Neural Network model. A dynamic architecture has two advantages: (1) no need to hand-craft a cell, and (2) it automatically adjusts based on timestep and sample.
		
		A simple RecNN model starts with a set of input nodes $ \left\{ p_1, \ldots, p_n \right\} $ with corresponding embedding vectors $\left\{ c_1, \ldots, c_n \right\}$. Two nodes are merged into a parent node using a pair of weight matrices $L$ and $R$, a bias vector $b$, and an activation function $\sigma$ that provides non-linearity. For two nodes $p_i$ and $p_j$, their parent, denoted by $p_{i,j}$, is also a node with the embedding vector calculated by  $c_{i,j} = \sigma\left(Lc_i + Rc_j + b \right)$. In each iteration, we compute the scores $s_{i,j} = W^{\mathrm{score}} c_{i,j}$ for all pairs of nodes $ (p_i, p_j) $ and select the pair of nodes $(p_{i_1}, p_{j_1})$ with the highest score. We next merge nodes $p_{i_1}$ and $p_{j_1}$ into the parent node $p_{i_1, j_1}$ and remove the two child nodes $p_{i_1}$ and $p_{j_1}$ from further consideration.  This procedure repeats until all nodes are merged and only one parent node $ p_{\mathrm{out}} $ remains. The set of parameters and activation function $\{L, R, b, \sigma, W^{\mathrm{score}}\} $ are shared across the whole network. The RecNN model returns $ p_{out} $ and the binary tree rooted at $ p_{\mathrm{out}} $ as the model output. 

		The RRNN model replaces the fixed hidden cell of RNN by a recursive tree, dynamically determined by an algorithm similar to RecNN. Note that, even with the fixed set of parameters and activation function $ \left\{L, R, b, \sigma, W^{\mathrm{score}}\right\} $, the RecNN model can dynamically produce different tree structures based on input nodes (vectors). Therefore, in RRNN, the recurrent cell is different across all time steps and data points. We further discuss the RRNN model in the following sections.

	\subsection{Recursive-Recurrent Neural Network Model Framework} \label{sec:model}
		We start with an example of how to represent the hidden cell structure of GRU to be a binary tree with computational information on it. In the following we assume $X$ is a given sample, where $ X = (x_1, \ldots, x_T), x_i \in \mathbb{R}^p $ is a sequence of input vectors. Recall that the GRU equations are:	\begin{align}
		r_t &= \sigma \left(W_r x_{t} + W_r' h_{t-1} + b_r\right), \label{eq:GRUeq1}\\
		z_t &= \sigma \left(W_z x_{t} + W_z' h_{t-1} + b_z\right), \\
		\tilde{h}_t &= \tanh \left(W_h x_{t} + W_h' \left(r_t \odot h_{t-1}\right) + b_h\right), \\
		h_t &= z_t \odot h_{t-1} + (1-z_t)\odot \tilde{h}_t, \label{eq:GRUeq4}
		\end{align}
		where $ \{W_r, W_r', W_z, W_z', W_h, W_h'\}$ and $  \{b_r, b_z, b_h  \}$ are parameter matrices and bias vectors of GRU, respectively. Equations (\ref{eq:GRUeq1}) -- (\ref{eq:GRUeq4}) jointly define the function $ f $ of the $ t $-th hidden cell of GRU. As shown in Figure \ref{fig:GRUtree1}, the above equations can also be regarded as a binary tree where each node of the tree corresponds to a 3-tuple (binary operator, activation function, bias vector), and each edge is associated with a trainable matrix or identity matrix. 
		
    	\begin{figure}[!h]
		    \centering
		    \includegraphics[width=.75\textwidth]{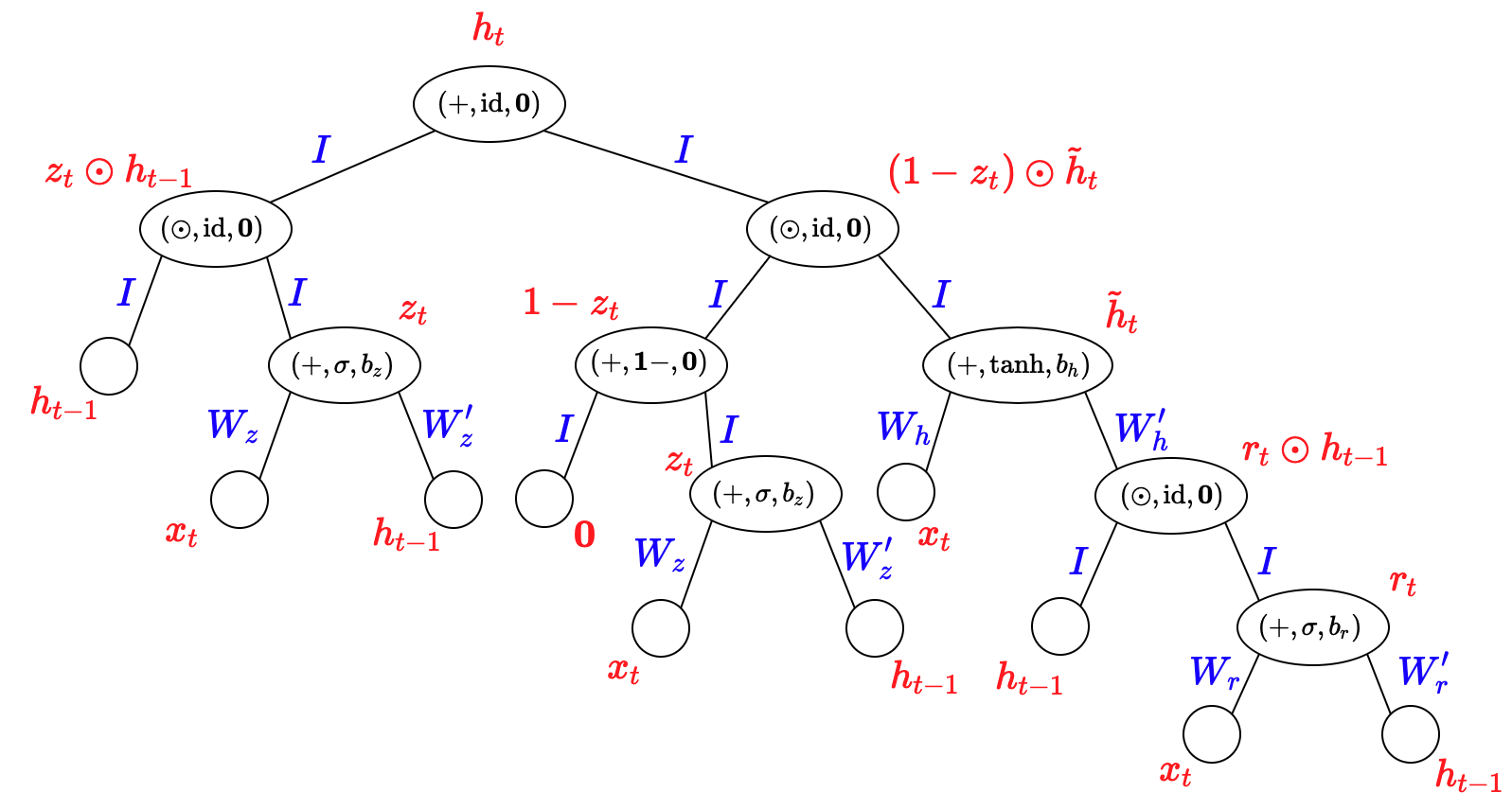}
		    \caption{Representation of GRU equations as a binary computational tree. Here red labels are the vectors correspond to each node, blue labels are the weight matrices corresponding to each edge from a child node to its parent, and the 3-tuple in each node is the binary operation, activation function, and the bias vector that are used to calculate the vector of this node. $\mathbf{0}$ stands for the zero vector, $\mathbf{1}-$ the one-minus activation and $\mathrm{id}$ the identity activation (mapping).}
			\label{fig:GRUtree1}
		\end{figure}
		
		The tree structure can be achieved in the scheme of RecNN by giving a multiset of initial nodes $ \calN_0 = \left\{x_t, x_t, x_t, x_t, h_{t-1}, h_{t-1}, h_{t-1}, h_{t-1}, h_{t-1}, \mathbf{0}\right\} $. Assuming an appropriate scoring function, in the first iteration, we can find that the parent node that combines $ x_t $ and $ h_{t-1} $ with parameters and operations $ (W_r, W_r', b_r, +, \sigma) $ has the highest score, thus we merge two nodes $ x_t $ and $ h_{t-1} $ together to achieve node $ q_t $. In the second iteration, we find that the parent node $ z_t = \sigma \left(W_z x_{t} + W_z' h_{t-1} + b_z\right) $ has the highest score among all potential parent nodes, thus we again take two nodes $ x_t $ and $ h_{t-1} $ from the node set and merge them to be $ z_t $. After $ 9 $ iterations, we end up with one node $ h_t $ and this is exactly the output of the $t$-th hidden cell of GRU. We can prove that, with an appropriate choice of the scoring function, a RecNN can find the tree in Figure \ref{fig:GRUtree1} and thus it can produce GRU. The statement is given in Section \ref{sec:grurrnn} and the proof is in the Appendix.

        Next, we present the RRNN model. The RRNN model has the same recurrent structure as RNN and an algorithm for building the hidden cell. We start with the case where only one hidden state needs to be transferred between two consecutive hidden cells (GRU falls in this case, but in LSTM we have two states, i.e. the hidden state $h_t$ and the memory state). We extend the model to be compatible with multiple hidden states in Appendix \ref{sec:extension}. 
		
		For each hidden cell of RRNN, we build up a binary tree from a multiset of initial nodes with corresponding vectors on each node. A set of trainable parameter matrices and bias vectors, activation functions, binary operations, and a scoring function is given prior to the construction of the tree. We denote $ \calL = \{L_1, \ldots, L_{n_l}\}$ and $ \calR = \{R_1, \ldots, R_{n_l}\} $ with $ L_i, R_i \in \mathbb{R}^{p\times p} $ as the set of trainable weight matrices, $\calB = \{b_1, \ldots, b_{n_l}\}$ with $ b_i \in \mathbb{R}^p $ as the set of trainable bias vectors, $\mathcal{U} = \{u_1, u_2, \ldots, u_{n_u}\}$ and $\mathcal{O} = \{o_1, o_2, \ldots, o_{n_o}\}$ as the set of available activation (unary) functions and binary operations, respectively. Besides, a scoring function $ \alpha(\cdot; \Theta) $, depending on a set of trainable parameters $ \Theta $, is given. A multiset of initial nodes $\calN_0 = \{c_1, c_2, \ldots, c_N\}$ with $c_i \in \mathbb{R}^p$ is also given as input of the hidden cell. We do not distinguish between a node and its corresponding vector in the following discussion, however we note that nodes in the tree are unique while the corresponding vectors form a multiset. The tree shown in Figure \ref{fig:GRUtree1} is the equivalent RRNN representation for the GRU cell.
		
		Formally, the RRNN hidden cell can be understood as a function $ f: \calN_0 \rightarrow (\calT, \mathbb{R}^p)$, where $ \calT $ is the set of all possible (binary) computational trees such that each node of the tree corresponds to a vector and a 3-tuple $ (u, o, b), u\in \calU, o \in \calO, b \in \calB$, and each directed edge from each one of the two child nodes to its parent node is associated with a weight matrix. 
		
		Function $f$ can be recursively defined as \begin{equation}\label{eq:f}
		    f = f_N \circ f_{N-1} \circ \cdots \circ f_2\circ f_1, 
		\end{equation} where $f_k: \calN_{k-1} \mapsto \calN_k, k=1,2,\ldots, N-1 $ and $f_N: \calN_{N-1} \rightarrow (\calT, \mathbb{R}^p)$. For $k=1,\ldots,N-1$, function $f_k $ maps multiset $\calN_{k-1}$ to multiset $\calN_k$ by the following three steps: (i) $ C_k = \left\{ c: c =u(o(L c_i, R c_j)+b), c_i, c_j \in \calN_{k-1}, i < j, L \in \calL, R \in \calR, b \in \calB, u \in \calU, o \in \calO \right\}$, (ii) $ c_k^* = \arg \max\limits_{c} \left\{\alpha(c; \Theta): c \in C_k \right\} $, (iii) $ \calN_k = \{c_k^*\} \cup \calN_{k-1} \setminus \{c_i^*, c_j^*\} $, where $ c_i^*, c_j^*$ are the two child nodes combined to get $ c_k^* $.

	    Note that $\calN_{N-1} = f_{N-1} \circ \cdots \circ f_1(\calN_0) $ contains only one node, i.e. $\calN_{N-1} = \left\{c_{N-1}^*\right\}$. Function $f_N$ then takes $c_{N-1}^*$ and returns the tree rooted at $c_{N-1}^*$ (we can discover it by unfolding the collapsing decisions and tracing each parent node down to its child nodes until all initial nodes appear) and the corresponding vector $c_{N-1}^* \in \mathbb{R}^p $ as the output. We point out that by definition the produced binary tree is full, i.e. each node has exactly 2 or 0 child nodes.  
	    
	    We next specify the recursive relationship of our cells. To this end, let multiset $\calN_0^t=\calN_0^t(x_t,h_{t-1})$ consist of several copies of $x_t$, several copies of $h_{t-1}$ and other constant vectors such as the vector of all zeros or all ones or unit vectors. The numbers of each of them can vary by $t$. The transition equations and cell output are as follows:
	    \begin{equation*}
    	    (T_t^{\pred},h_t)=f(\calN_0^t(x_t,h_{t-1})), \quad q_t=g(x_t,h_t;\Gamma).
	    \end{equation*} 
    	It remains to specify the loss function. The generic function is as follows with further details provided in Section \ref{sec:loss}. 
    	We assume that a sample consists of $ (X, Y)$ where $ Y = (y_{1}, \ldots, y_{T}) $ is a sequence of ground truth labels. We also assume that we are given a ground truth binary tree $T_t^{\target}$ which is specified as in Figure \ref{fig:GRUtree1} but without the trainable matrices and bias vectors. The target tree usually does not depend on $t$. Ideally this target tree should not be specified but we leave this as future research work. 
    	
    	One further complication is the fact that the ground truth tree does not have a unique representation. Indeed, since the leaf nodes corresponding to $\calN_0$ are unordered, there are several isomorphisms of a given tree that yield the same underlying ground truth transition function, i.e. mathematically equivalent expressions. To this end, let ${\Iso}(T_t^{\target})$ be the set of all isomorphic trees to $T_t^{\target}$. Note that we do not need to consider the isomorphisms when leaf nodes are ordered, as is the case in \cite{socher2011parsing}.
    	
    	The set of all trainable parameters in RRNN is denoted by $\Phi=\{\calL, \calR, \calB, \Theta, \Gamma \}$. The loss function is specified by \begin{equation}
    	    \begin{aligned} \label{eq:loss}
    	    L(\Phi)=\mathbb{E}_{(X,Y)}\Bigg[\sum_{t=1}^T \bigg\{ \lambda_1 l(y_t,q_t)& +\lambda_2 \min_{\bar{T}\in {\Iso}(T_t^{\target})} \mathrm{TD}(\bar{T},T_t^{\pred}) +\lambda_3 \sum_{k=0}^{N-1} m(\calN_k^t)\bigg\}\Bigg] \\ & + \lambda_4 \sum_{\phi \in \Phi} \|\phi \|^2 \;,
    	\end{aligned}
    	\end{equation}
    	where function $l$ is the standard loss function, $TD$ measures the difference of two trees, and $m$ is the margin function. These two are described in detail in Section \ref{sec:loss}. The \emph{minimum} operation over isomorphic target trees can also be replaced by expectation. 

\subsection{Cell Tree Construction} 

        Several changes of constructing the cell tree are made for practical concerns. Functions $ f_k, k = 1,\ldots, N-1 $ can be regarded as $ N-1 $ iterations of merging two nodes (vectors). Multisets $\calN_k$ can have multiple copies but in practice we keep a single copy that is reused. The new set $\calN_{k-1}$ consists of three fixed sets of vectors, namely $\calS_t^{\text{data}} = \{x_t\}$ as the set of vectors from the data samples, $\calS_t^{\text{prev}}$ as the set of vectors from the previous hidden cell, and $\calS_t^{\text{aux}}$ as the set of auxiliary vectors such as the zero vector, etc., together with the set $\calP_{k-1}$ as the set of generated parent nodes. The model takes the new set $\calN_{k-1} = \calS_t^{\text{data}} \cup \calS_t^{\text{prev}} \cup \calS_t^{\text{aux}} \cup \calP_{k-1}$ as the set of all potential choices of child nodes to build the $k$-th parent node $c_k^*$. Then we set $\calP_k = \calP_{k-1} \cup \{c_k^*\}$ and step to the $(k+1)$-th iteration. We further need a hyper-parameter $\bar{N}$ corresponding to the number of iterations of the tree construction steps. The practical algorithm for constructing the computational tree for the $ t $-th hidden cell of RRNN is exhibited in Algorithm \ref{algo:2}. It is worth mentioning that the number of iterations $\bar{N}$ in Algorithm \ref{algo:2} might be different from the number of nodes $N$ in the predicted tree. A vector might be chosen several times to serve as a child node in Algorithm \ref{algo:2}. In this case, the number of nodes $N$ in the predicted tree is larger than the number of iterations $\bar{N}$.
		
		\begin{algorithm}[ht]
			\caption{Construction of computational tree for $ t $-th hidden cell}
			\label{algo:2}
			\begin{algorithmic}[1]
				\STATE Input: $ \calL, \calR, \calB, \calU, \calO, \alpha, \mathcal{S}_t^{data}, \mathcal{S}_t^{prev},  \mathcal{S}_t^{aux}, \bar{N} $
				\STATE Output: 
				\STATE \quad $ h_t $: The hidden state of $ t $-th hidden cell
				\STATE \quad $ T_t^{\pred} $: Binary computational tree corresponds to the $ t $-th hidden cell
				\STATE
				\STATE $ \mathcal{P}_0 \leftarrow \emptyset$
				\FOR{$ k = 1$ to $ \bar{N} $}
				\STATE $ V_t^k \leftarrow \emptyset $
				\FOR{$ r = 1$ to $ |\calL| $, all $ o \in \mathcal{O} $, and all $ u \in \mathcal{U} $}
				\FOR{$c_i, c_j \in \mathcal{S}_t^{data} \cup \mathcal{S}_t^{prev} \cup \calS_t^{\text{aux}} \cup \mathcal{P}_{k-1} $, $ i < j $}
				\STATE $ V_t^k \leftarrow V_t^k \cup \{u(o(L_r c_i, R_r c_j)+b_r)\} $
				\ENDFOR
				\ENDFOR
				\STATE $c_k^{*} \gets \arg \max \left\{ \alpha(v; \Theta): v \in V_t^k, v \not\in \calP_{k-1} \right\}$
				\STATE $\calP_{k} \gets \calP_{k-1} \cup \{c_k^{*}\}$
				\ENDFOR
				\STATE $ h_t \gets c_{\bar{N}}^{*} $, $ T_t^{\pred} \gets $ the tree rooted at $ c_{\bar{N}}^{*} $
				\STATE Return $ h_t $ and $ T_t^{\pred} $
			\end{algorithmic}
		\end{algorithm}

	\subsection{Loss Function} \label{sec:loss}

        We discuss the definition of the tree distance (TD) and the scoring margin $m$ in this section.
		\paragraph{Score Margin}
    		To give scoring more partitioning power, we incentivize it to leave a significant margin between the score of the highest-scoring vector and the second-highest vector for each node. Recall the definition of $C_k$ and $c_k^*$ from Section \ref{sec:model}. We further define $ c_k^{**} $ to be the vector with the second highest score among the vectors in $C_k$. In Algorithm \ref{algo:2} the analogous to $C_k$ is $V_t^k$. The scoring margin function is thereby defined as $ m(\calN_k) = -\frac{1}{M} \min \{ M, \alpha(c_k^*; \Theta) - \alpha(c_k^{**}; \Theta)\}, $ where $ M $ is a hyper-parameter. Intuitively, the margin function incentivizes scoring to increase the gap between the scores of the highest and second-highest vectors to at least $M$. We divide by $M$ so that the overall scale of this loss term is not affected by the choice of $M$.
		
		\paragraph{Tree Distance}		
		For convenience, in the discussion of this part, we use $T^{\mathrm{pd}}$ and $ T^{\mathrm{tgt}} $ to denote the predicted tree and the target (ground truth) tree, respectively. For any binary tree $\bar{T}$, we use $\mathrm{Int}(\bar{T})$ to denote all internal (non-leaf) nodes of $\bar{T}$.  We use $ \calI(\bar{T}) $ to denote the labeling of $ \mathrm{Int}(\bar{T}) $ such that the root node of $\bar{T}$ has index 1, and if a node has index $i$, then its left and right child nodes have index $2i$ and $2i+1$, respectively. For a node $n \in \mathrm{Int}(\bar{T})$, we use $ \Subtree(\bar{T}, n) $ to denote the subtree of $ \bar{T} $ rooted at node $ n $. In addition, we use $n_{i, \bar{T}}$ and $ v_{i, \bar{T}} $ to denote the node and the corresponding vector with index $ i $ in tree $\bar{T}$, respectively. 
		
		Given two binary trees $ T_1 $ and $ T_2 $, we define \begin{equation*}
		\mathrm{VD}(T_1, T_2) = \sum_{i \in \calI(T_1)\cap \calI(T_2)} \left\|v_{i, T_1} - v_{i, T_2}\right\|^2 + \sum_{i \in \calI(T_1)\setminus \calI(T_2)} \left\|v_{i, T_1}\right\|^2 + \sum_{i \in \calI(T_2)\setminus \calI(T_1)} \left\|v_{i, T_2}\right\|^2 
		\end{equation*}
		to be the vector differences (VD) of these two trees. The tree distance between $T^{\mathrm{pd}}$ and $T^{\mathrm{tgt}}$ is the sum over all minimum VD values between a sub-tree of $T^{\mathrm{pd}}$ and all sub-trees of $T^{\mathrm{tgt}}$: \begin{equation*}
		    \mathrm{TD}(T^{\mathrm{pd}}, T^{\mathrm{tgt}}) = \sum_{n_1 \in V(T^{\mathrm{pd}})} \min_{n_2 \in V(T^{\mathrm{tgt}})} \Big\{  \mathrm{VD}\big(\Subtree(T^{\mathrm{pd}}, n_1), \Subtree(T^{\mathrm{tgt}}, n_2)\big) \Big\}.
		\end{equation*}
        This expression matches each subtree in $T^{\mathrm{pd}}$ with the closet subtree in $T^{\mathrm{tgt}}$ with respect to VD, and therefore the TD measures the difference of vectors on all of the nodes of the two trees.

    \section{Properties of RRNN and Gradient Control} \label{sec:properties}
    In this section, we state some properties of the RRNN model and show how to avoid gradient exploding and vanishing during training of RRNN. We give theorems in this section and defer the proofs to the appendix.
    
    \subsection{Expressibility of RRNN}\label{sec:grurrnn}
        We argue that if we carefully choose sets $\calL, \calR, \calB, \calU, \calO, \mathcal{S}_t^{data}, \mathcal{S}_t^{prev},  \mathcal{S}_t^{aux}$, the quantity $N$, and the scoring function $\alpha$, then Algorithm \ref{algo:2} can replicate the GRU and LSTM equations. We give the formal statements in this section and defer the choice of the sets and the proof to Appendix \ref{appendix:expressibility}.
        
    	\begin{theorem} \label{thm:GRU}
    		There exists a scoring function $ \alpha $ such that Algorithm \ref{algo:2} generates GRU equations (\ref{eq:GRUeq1}) -- (\ref{eq:GRUeq4}) with an appropriate choice of $\calL, \calR, \calB, \calU, \calO, \mathcal{S}_t^{data}, \mathcal{S}_t^{prev}$, and $  \mathcal{S}_t^{aux}$.
    	\end{theorem}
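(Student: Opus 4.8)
\textit{Proof idea.} The plan is to fix once and for all the exact sequence of merges Algorithm~\ref{algo:2} should perform, and then to engineer $\alpha$ so that this sequence is forced for every input. Concretely, take $\bar{N}=8$ and let the $k$-th iteration produce, in order, $r_t$, $z_t$, $g_t := r_t\odot h_{t-1}$, $\tilde h_t$, $\bar z_t := \mathbf 1 - z_t$, $a_t := z_t\odot h_{t-1}$, $b_t := \bar z_t\odot\tilde h_t$, and $h_t := a_t + b_t$, where $r_t$ and $z_t$ are each formed from $x_t$ and $h_{t-1}$; $g_t$ from $r_t$ and $h_{t-1}$; $\tilde h_t$ from $x_t$ and $g_t$; $\bar z_t$ from $z_t$ and the zero vector $\mathbf 0\in\mathcal{S}_t^{aux}$ using the unary function $\mathbf 1-$; $a_t$ from $z_t$ and $h_{t-1}$; $b_t$ from $\bar z_t$ and $\tilde h_t$; and $h_t$ from $a_t$ and $b_t$. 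Here $z_t$ is consumed as a child twice, which Algorithm~\ref{algo:2} permits and which is exactly why the resulting tree has more nodes than $\bar{N}$. Accordingly I would put into $\calL,\calR$ the matched pairs realising $(W_r,W_r')$, $(W_z,W_z')$, $(W_h,W_h')$ together with copies of $(I,I)$, into $\calB$ the biases $b_r,b_z,b_h$ and $\mathbf 0$, and take $\mathcal{U} = \{\sigma,\tanh,\mathrm{id},\mathbf 1-\}$, $\mathcal{O} = \{+,\odot\}$, $\mathcal{S}_t^{data} = \{x_t\}$, $\mathcal{S}_t^{prev} = \{h_{t-1}\}$, $\mathcal{S}_t^{aux} = \{\mathbf 0\}$ (after fixing an ordering of the initial nodes so that the matched pairs land on the intended left/right children). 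With these choices each of the eight merges is a legal step $u(o(Lc_i,Rc_j)+b)$ of Algorithm~\ref{algo:2} whose value equals the corresponding GRU quantity in (\ref{eq:GRUeq1})--(\ref{eq:GRUeq4}); the root then carries $h_t = z_t\odot h_{t-1} + (1-z_t)\odot\tilde h_t$ and the produced tree is the one of Figure~\ref{fig:GRUtree1}. What remains is to exhibit a scoring function $\alpha$ under which these are the merges Algorithm~\ref{algo:2} actually selects.

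This last point is where the real work is, and there is a genuine obstruction to the naive approach: $\alpha$ sees only the vector value of a candidate, and for a general GRU the values attainable in the ``$g_t = r_t\odot h_{t-1}$'' slot and in the ``$z_t\odot h_{t-1}$'' slot at iteration~3 both fill all of $\mathbb R^p$, so no value-based $\alpha$ can consistently prefer the former --- one can produce inputs $A$ and $B$ with $r_t(A)\odot h_{t-1}^A = z_t(B)\odot h_{t-1}^B =: v$ and $z_t(A)\odot h_{t-1}^A = r_t(B)\odot h_{t-1}^B =: w$, which forces $\alpha(v) > \alpha(w)$ at input $A$ and $\alpha(w) > \alpha(v)$ at input $B$. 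I would therefore enlarge the working space to $\mathbb R^p \oplus \mathbb R^d$ for a small constant $d$: carry the genuine GRU computation in the first block and an \emph{input-independent bookkeeping code} in the second. Every matrix is taken block diagonal, acting as the intended GRU matrix (or $I$) on the first block and as a fixed linear map on the control block; every element of $\mathcal{U}$ acts as the intended activation on the first block and as the identity on the control block; and the control maps, the control parts of the biases, the control codes assigned to $x_t$, $h_{t-1}$, $\mathbf 0$, and the fact that the chosen binary operation ($+$ versus $\odot$) also acts on the control block are all chosen so that the control code of any candidate is a deterministic function of the recipe $(L,R,b,u,o)$ used and of the control codes of its two children --- never of $x_t$ or $h_{t-1}$. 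These ingredients are arranged so that, for $1\le k\le 7$, the code ``$k$'' is produced by exactly the $k$-th recipe applied to exactly the two children intended for the $k$-th merge above, the eighth (root) recipe resets the control block to the code reserved for the incoming hidden state so that the recursion $h_t\mapsto h_{t-1}$ stays consistent, and every other recipe/child combination yields a ``garbage'' code outside the admissible set. Finally set $\alpha(c;\cdot)$ to depend only on the control block of $c$, strictly decreasing in the code over $\{1,\dots,7\}$, assigning the root code a value above all garbage codes, and assigning every garbage code a still smaller value.

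It then remains to check that the $\arg\max$ in line~13 of Algorithm~\ref{algo:2} picks the intended node at each iteration $k$. The intended node lies in $V_t^k$ because the build order was chosen precisely so that both of its children are in $\mathcal{S}_t^{data}\cup\mathcal{S}_t^{prev}\cup\mathcal{S}_t^{aux}\cup\calP_{k-1}$; it carries the admissible code $k$ (or, at $k=8$, the reserved code); and it is not yet in $\calP_{k-1}$. Any candidate carrying a code $j < k$ must, by construction, be exactly the $j$-th node, whose vector already lies in $\calP_{k-1}$ and is therefore excluded in line~13; any currently applicable candidate carrying an admissible code $j > k$ has a strictly smaller score than the intended node; and every remaining candidate carries a garbage code and hence a still smaller score. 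Because the control block, and therefore this whole argument, does not depend on $x_t$ or $h_{t-1}$, the single scoring function $\alpha$ works at every time step and for every sample, which is what the theorem asserts. I expect the main obstacle to be exactly this construction-and-verification of the control coordinates --- in particular checking that the finitely many control maps and codes can be chosen to realise the required ``next-code'' function and to keep all non-intended recipe/child combinations out of the admissible set --- since, as noted, the much simpler value-only scoring provably cannot work for a general GRU.
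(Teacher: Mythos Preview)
Your setup and eight-step merge sequence match the paper's almost exactly (you swap the order in which $\tilde h_t$ and $\mathbf 1-z_t$ are built, which is harmless since neither depends on the other). The divergence is entirely in how $\alpha$ is obtained. The paper does not attempt anything like your control-block construction: after walking through the eight iterations and naming the intended winner $c_k^*$ at each, it simply observes that every candidate set $V_t^k$ is finite and invokes Lemma~\ref{lemma:scoring} to produce an $\alpha$ with $\alpha(c_k^*;\Theta)>\alpha(v;\Theta)$ for all competitors $v$. Since that lemma builds $\alpha$ as a sum of Gaussian bumps centred at the specific vectors being ranked, the $\alpha$ it yields depends on the particular $(x_t,h_{t-1})$; the paper does not address whether a single $\alpha$ works across all inputs and time steps.

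Your objection to this---that for a generic GRU no value-only $\alpha$ can universally prefer $r_t\odot h_{t-1}$ over $z_t\odot h_{t-1}$, because one can swap their values by changing the input---is well taken, and your augmented-space idea (carrying an input-independent bookkeeping code in extra coordinates and letting $\alpha$ read only that code) is exactly the additional device needed to make the scoring uniform. The trade-off is that the paper's route is a two-line appeal to Lemma~\ref{lemma:scoring} and suffices for the per-input existence it actually establishes, whereas your route buys the stronger input-independent claim the theorem seems to assert, at the price of enlarging the ambient space and block-diagonalising the activations so that the GRU equations hold only on a projection of $h_t$. If you pursue your construction, the points to verify carefully are that the control maps can be chosen so that both binary operations $+$ and $\odot$ (which act coordinatewise on the control block too) send every unintended recipe/child pair to a garbage code, and that the root step genuinely resets the control block to the code reserved for $h_{t-1}$ so the recursion is consistent.
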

    
        \begin{theorem} \label{thm:LSTM}
        	There exists a scoring function $ \alpha $ such that Algorithm \ref{algo:2} (applied twice) generates the LSTM equations with an appropriate choice of $\calL, \calR, \calB, \calU, \calO, \mathcal{S}_t^{data}, \mathcal{S}_t^{prev}$, and $  \mathcal{S}_t^{aux}$.
    	\end{theorem}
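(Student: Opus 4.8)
The plan is to adapt the argument behind Theorem~\ref{thm:GRU} to the fact that an LSTM cell carries two state vectors, the hidden state $h_t$ and the memory state $c_t$. First I would write the LSTM recursion
\begin{align*}
i_t &= \sigma(W_i x_t + W_i' h_{t-1} + b_i), & f_t &= \sigma(W_f x_t + W_f' h_{t-1} + b_f), \\
o_t &= \sigma(W_o x_t + W_o' h_{t-1} + b_o), & \tilde c_t &= \tanh(W_c x_t + W_c' h_{t-1} + b_c), \\
c_t &= f_t \odot c_{t-1} + i_t \odot \tilde c_t, & h_t &= o_t \odot \tanh(c_t),
\end{align*}
and encode it, exactly in the style of Figure~\ref{fig:GRUtree1}, as two rooted full binary computational trees: a tree $T^{c}$ rooted at $c_t$, and a tree $T^{h}$ rooted at $h_t$ having $c_t$ among its leaves. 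Both trees use only the binary operations $+$ and $\odot$, the unary maps $\sigma$, $\tanh$ and $\mathrm{id}$, the constant leaf $\mathbf{0}$, and on each edge either the identity matrix $I_p$ or one of the eight LSTM weight matrices. Accordingly I would take $\calO \supseteq \{+,\odot\}$, $\calU \supseteq \{\sigma,\tanh,\mathrm{id}\}$, let $\calL$ and $\calR$ each contain $I_p$ together with the LSTM matrices, let $\calB$ contain $\mathbf{0}$ together with $\{b_i,b_f,b_o,b_c\}$, and set $\mathcal{S}_t^{\mathrm{data}}=\{x_t\}$, $\mathcal{S}_t^{\mathrm{prev}}=\{h_{t-1},c_{t-1}\}$, $\mathcal{S}_t^{\mathrm{aux}}=\{\mathbf{0}\}$; a single copy of each vector suffices since Algorithm~\ref{algo:2} may reuse a vector as a child node several times.

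The clause ``applied twice'' refers to the multiple--hidden--state mechanism of Appendix~\ref{sec:extension}: the first pass of Algorithm~\ref{algo:2} builds $T^{c}$ and returns the memory state (its output ``$h_t$'' being $c_t$ here), and the second pass, run with $c_{t-1}$ in the previous--state set replaced by the freshly produced $c_t$, builds $T^{h}$ and returns $h_t$; both $h_t$ and $c_t$ are then forwarded to step $t+1$. For the first pass I would set $\bar N=6$ and prescribe the merge order $i_t,\; f_t,\; \tilde c_t,\; f_t\odot c_{t-1},\; i_t\odot\tilde c_t,\; c_t$: the first three are produced by combining $x_t$ with $h_{t-1}$ under $+$ with the corresponding matrix/bias/activation triple, the next two by combining already--built parents (and $c_{t-1}$) under $\odot$ with identity edges, zero bias and activation $\mathrm{id}$, and the last by adding the two preceding parents. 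For the second pass I would set $\bar N=3$ with order $o_t,\; \tanh(c_t),\; o_t\odot\tanh(c_t)$, where $\tanh(c_t)$ is obtained from the children $c_t$ and $\mathbf{0}$ under $+$ with identity edges and zero bias. One then checks routinely that at every step $k$ the intended vector indeed lies in the candidate set $V_t^{k}$ (all its children were produced earlier or are available leaves) and is distinct from every element of $\calP_{k-1}$, so the constraint $v\notin\calP_{k-1}$ is respected, and that the two produced trees are $T^{c}$ and $T^{h}$.

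The technical heart -- and the step I expect to be the main obstacle -- is to exhibit a single scoring function $\alpha(\cdot;\Theta)$ whose greedy $\arg\max$ realizes these two prescribed merge sequences, uniformly over (essentially all) values of $(x_t,h_{t-1},c_{t-1})$. The subtlety is that the candidates in each $V_t^{k}$ are vector--valued \emph{functions} of the state inputs, not fixed vectors, so ``the intended candidate dominates all competitors'' must hold pointwise in the inputs; as in the GRU proof, this is arranged by choosing the weight matrices and biases together with $\alpha$ in a coordinated way -- generic (or suitably structured) choices of the matrices separate the intended intermediate vectors from all spurious candidates and rule out accidental coincidences, and $\alpha$ is then tailored to assign the intended vector the strict maximum at the step where it should be created. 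Since the theorem only asks that Algorithm~\ref{algo:2} \emph{generate the LSTM equations}, it finally suffices to note that $T^{c}$ and $T^{h}$, by construction, evaluate to $c_t$ and $h_t$ as the stated functions of $(x_t,h_{t-1},c_{t-1})$, which completes the plan; a fully rigorous treatment would carry out the scoring--function construction in detail exactly as done for Theorem~\ref{thm:GRU}.
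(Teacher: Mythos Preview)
Your plan is correct and follows the same high-level route as the paper: run Algorithm~\ref{algo:2} twice, once to build the tree rooted at $c_t$ and once to build the tree rooted at $h_t$, then appeal to a scoring-function existence argument. Two points of comparison are worth noting. First, your split differs slightly from the paper's restatement (Theorem~\ref{thm:LSTMapp}): the paper uses $\bar N=7$ in the first pass (computing $f_t,i_t,o_t,\tilde c_t$ and then $c_t$) and $\bar N=2$ in the second, whereas you defer $o_t$ to the second pass and use $\bar N=6$ and $\bar N=3$; your decomposition is arguably cleaner, since in the paper's second run the parameter sets contain only identity matrices and zero bias, so $o_t$ cannot be reproduced there and must implicitly be carried over. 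Second, the ``technical heart'' you worry about---making $\alpha$ select the intended candidate \emph{uniformly} over all inputs $(x_t,h_{t-1},c_{t-1})$---is not something the paper actually tackles: the paper's proof of Theorem~\ref{thm:GRUapp} (to which the LSTM proof is declared ``almost the same'') simply observes that each $V_t^k$ is a finite set of vectors and invokes Lemma~\ref{lemma:scoring} to rank them, treating the inputs as fixed. So your proposed construction is more ambitious than what the paper supplies; for the purposes of matching the paper you may simply cite Lemma~\ref{lemma:scoring} at the end, as the GRU proof does.
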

    
    \subsection{Controlling Gradient}
    	As introduced in \cite{279181}, the exploding gradient problem refers to the large increase in the norm of the gradient during training. This is due to the fact that the gradient of long-term dependencies grows exponentially quicker than for short-term dependencies. The vanishing gradient problem, on the other hand, refers to the behavior that the gradients of long-term dependencies go to zero exponentially. \cite{pascanu2012understanding} introduce a sufficient condition of vanishing gradient and a necessary condition of exploding gradient for a simple RNN. In this section, we extend their results to a more general case -- we provide these two conditions for our RRNN model. We note that our result as a special case applies to RecNN where such conditions have not yet been established. 
    	
    	We consider the case where only one hidden state $h_t$ is returned by the $t$-th hidden cell of the RRNN model. The loss function (\ref{eq:loss}) can be written as $L(\Phi) = \sum_{t=1}^T \mathcal{E}_t $ where each $\calE_t$ is a function of all parameters in $\Phi$. For $1\le t \le T$, the gradient of $\calE_t$ with respect to $\phi \in \Phi$ comes from $t$ cells, namely $\frac{\partial \calE_t}{\partial \phi} = \sum_{t^\prime=1}^t \frac{\partial \calE_t}{\partial h_t} \frac{\partial h_t}{\partial h_{t^\prime}} \frac{\partial^+ h_{t^\prime}}{\partial \phi}$, where $\frac{\partial^+ h_{t^\prime}}{\partial \phi}$ refers to the direct gradient of $h_{t^\prime}$ with respect to $\phi$ directly appearing within the $t^\prime$-th hidden cell. If $\phi$ is a matrix, then we mean $\frac{\partial^+ h_{t^\prime}}{\partial \phi} = \frac{\partial^+ h_{t^\prime}}{\partial \mathrm{vec}(\phi)}$, where $\mathrm{vec}(\phi)$ is an appropriate matrix vectorization. The exploding (vanishing) gradient problem is defined by $\left\| \frac{\partial \calE_t}{\partial h_t} \frac{\partial h_t}{\partial h_{t^\prime}} \frac{\partial^+ h_{t^\prime}}{\partial \phi} \right\|$ going to $+\infty$ ($0$) exponentially fast as $t$ goes to $+\infty$ and $t^\prime$ is fixed as a constant. For simplicity, we consider the case where $t = T$ and $t^\prime = 1$. 
    	
    	We state a simplified version of the theorems here and defer the full version to Appendix \ref{appendix:gradientcontrol}. We argue that most of the time these conditions are met in practice and we elaborate them one by one in Appendix \ref{appendix:discussionvanishing}.
    	
    	\begin{theorem}[Sufficient condition of gradient vanishing] \label{thm:vanishing} Under certain conditions given in Theorem \ref{thm:vanishingapp}, we have $\left\| \frac{\partial \calE_T}{\partial h_T} \frac{\partial h_T}{\partial h_{1}} \frac{\partial^+ h_{1}}{\partial \phi} \right\| \rightarrow 0$ as $T\rightarrow +\infty$, i.e., the vanishing gradient problem occurs.
    	\end{theorem}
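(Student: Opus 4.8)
The plan is to adapt the factorization argument of \cite{pascanu2012understanding} for simple RNNs to the recursively-built cell. Write $\frac{\partial h_T}{\partial h_1}=\prod_{t=2}^{T}\frac{\partial h_t}{\partial h_{t-1}}$, so it suffices to establish: (i) a uniform contraction $\left\|\frac{\partial h_t}{\partial h_{t-1}}\right\|\le\gamma$ with $\gamma<1$ for all $t$; and (ii) boundedness, uniformly in $T$, of the two endpoint factors $\left\|\frac{\partial\mathcal{E}_T}{\partial h_T}\right\|$ and $\left\|\frac{\partial^{+}h_1}{\partial\phi}\right\|$. Granting these, $\left\|\frac{\partial\mathcal{E}_T}{\partial h_T}\frac{\partial h_T}{\partial h_1}\frac{\partial^{+}h_1}{\partial\phi}\right\|\le C\,\gamma^{T-1}\to 0$, which is the assertion.

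The only genuinely new ingredient, compared with the RNN analysis, is (i): $h_t$ is produced not by a fixed map but by the computational tree $T_t^{\mathrm{pred}}$, and copies of $h_{t-1}$ occupy several of its leaves. I would expand $\frac{\partial h_t}{\partial h_{t-1}}$ by the chain rule over this tree. At an internal node with value $c=u\!\left(o(Lc_i,Rc_j)+b\right)$, the Jacobian of $c$ with respect to a child value $c_i$ is $\mathrm{diag}(u')\,J_{o}\,L$, where $J_o$ is the Jacobian of $o$ in its first argument --- the identity when $o=+$, and a diagonal matrix built from the (transformed) sibling value when $o=\odot$ (symmetrically for $c_j$ with $R$). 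Thus $\frac{\partial h_t}{\partial h_{t-1}}$ is a sum over all root-to-leaf paths ending at an $h_{t-1}$-leaf of the ordered product of such factors along the path. I would bound each factor in spectral norm by $\beta:=\beta_u\beta_o\beta_W$, where $\beta_u$ bounds $\|u'\|_\infty$ over $u\in\mathcal{U}$ (finite for sigmoid, $\tanh$, the identity, and the one-minus map), $\beta_W\ge\max_i\{\|L_i\|,\|R_i\|\}$, and $\beta_o$ bounds $\|J_o\|$ over $o\in\mathcal{O}$; the last bound for $o=\odot$ requires all intermediate node vectors to be uniformly bounded, which is one of the hypotheses imported from Theorem~\ref{thm:vanishingapp} and holds automatically when the activations have bounded range and the $x_t$ are bounded.

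Because the number of iterations $\bar N$ is a fixed hyper-parameter, the number of leaves and the depth of $T_t^{\mathrm{pred}}$ are bounded by constants $N_{\max},D_{\max}$ depending only on $\bar N$, so $\left\|\frac{\partial h_t}{\partial h_{t-1}}\right\|\le N_{\max}\,\beta^{D_{\max}}$; the standing hypothesis of the theorem is precisely that this quantity (or, more sharply, the realised tree-weighted sum of path products) is $<1$ --- the RecNN/RRNN counterpart of Pascanu's condition $\|\mathrm{diag}(\sigma')W\|<1$. For (ii), $\frac{\partial^{+}h_1}{\partial\phi}$ is a finite composition of the same bounded per-node factors together with the (bounded, since all leaf and intermediate vectors are bounded) partials of $o(Lc_i,Rc_j)+b$ in $L,R,b$, hence bounded by a constant depending only on $\bar N$ and $\beta_u,\beta_o,\beta_W$; and $\mathcal{E}_T$ depends on $h_T$ only through $q_T=g(x_T,h_T;\Gamma)$ in $l(y_T,q_T)$, through the node vectors of $T_T^{\mathrm{pred}}$ in the $\mathrm{TD}$ term, and through the margin term, each of which has gradient in $h_T$ bounded uniformly in $T$ under the regularity assumptions of Theorem~\ref{thm:vanishingapp} (bounded inputs and labels, Lipschitz $g$ and $l$, and the zero-vector auxiliary choice that keeps $\sum_k m(\mathcal{N}_k^{t})$ well-behaved).

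I expect the main obstacle to be a careful execution of (i): organising the chain rule over a \emph{dynamically generated} tree whose shape depends on $t$ and on the sample, accounting for the fact that one node $h_{t-1}$ feeds several leaves and that when $o=\odot$ the sibling subtree can itself depend on $h_{t-1}$ (so one needs a full Leibniz expansion rather than naive path products), and then confirming that the resulting aggregate contraction factor --- which varies with the realised tree --- is still dominated by a single $\gamma<1$ under assumptions that are mild in practice. Once the uniform boundedness of all node vectors is in place, bounding the two endpoint factors in (ii) is comparatively routine.
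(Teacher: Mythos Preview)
Your overall skeleton---factor $\frac{\partial h_T}{\partial h_1}$ as $\prod_t \frac{\partial h_t}{\partial h_{t-1}}$, expand each Jacobian as a sum over root-to-leaf paths in $T_t^{\mathrm{pred}}$, and bound each edge factor by a constant $\beta=C_1C_2C_3$---is exactly the paper's. But two points in your step (i) do not go through as written.

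First, the inequality $\left\|\frac{\partial h_t}{\partial h_{t-1}}\right\|\le N_{\max}\,\beta^{D_{\max}}$ is in the wrong direction. The path expansion gives $\sum_{k}\beta^{l_k}$ with $l_k$ the depth of the $k$-th leaf; when $\beta<1$ the \emph{short} paths dominate, so $\beta^{l_k}\ge \beta^{D_{\max}}$ and you cannot bound the sum by $N_{\max}\beta^{D_{\max}}$. Second, and more substantively, you take ``the realised tree-weighted sum of path products is $<1$'' as the standing hypothesis. That is a condition that changes with $t$ and with the sample, because the tree is dynamic; it is not what the paper assumes, and it would not by itself yield a single $\gamma<1$ uniform in $t$. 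The paper's actual hypothesis is the edge-level bound $C_1C_2C_3<\tfrac12$, and the crux of the proof is a combinatorial lemma you are missing: for \emph{any} full binary tree with leaf depths $l_1,\dots,l_{N+1}$ and any $C_0<\tfrac12$, one has $\sum_{k=1}^{N+1}C_0^{\,l_k}\le 1-(\tfrac12-C_0)$. This is a Kraft-inequality-type fact (since $\sum_k 2^{-l_k}=1$ for full binary trees), proved in the paper by induction on $N$, and it is what turns the per-edge $\tfrac12$ threshold into a uniform contraction $\gamma=\tfrac12+C_0<1$ independent of the realised tree shape. Without it you do not get a $t$-free $\gamma$.

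Two smaller remarks. Your worry about $\odot$ forcing ``a full Leibniz expansion rather than naive path products'' is unfounded: the sum over root-to-leaf paths \emph{is} the full multivariable chain rule; the Jacobian of $\odot$ in one argument involves only the value of the sibling, which is already controlled by condition~(\ref{eq:vanishCond3}). And for part (ii), the paper simply \emph{assumes} $\left\|\frac{\partial\mathcal{E}_t}{\partial h_t}\right\|\le C_4$ as one of the hypotheses (and argues separately that it is mild), so you need not derive it inside the proof.
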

    	\begin{theorem}[Necessary condition of gradient exploding] \label{thm:exploding} If we observe the vanishing gradient problem, then at least one of the conditions listed in Theorem \ref{thm:explodingapp} holds.
    	\end{theorem}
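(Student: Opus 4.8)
The plan is to prove the contrapositive. Observe first that the quantity in the statement factors through the recurrent Jacobian product $\frac{\partial h_T}{\partial h_1}=\prod_{t=2}^{T}\frac{\partial h_t}{\partial h_{t-1}}$, so that, writing $u=\big(\frac{\partial \mathcal{E}_T}{\partial h_T}\big)^{\!\top}$ and $J=\frac{\partial^+ h_1}{\partial \phi}$, the expression equals $u^{\top}\big(\prod_{t=2}^{T}\frac{\partial h_t}{\partial h_{t-1}}\big)J$, and the vanishing hypothesis asserts that its norm tends to $0$ as $T\to\infty$. To prove the stated implication I establish the contrapositive: if \emph{none} of the conditions in Theorem \ref{thm:explodingapp} holds, then this norm does \emph{not} vanish. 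Since the hypothesis is vanishing, the natural route is therefore a \emph{lower} bound on the norm that stays bounded away from $0$. This mirrors the dichotomy in \cite{pascanu2012understanding}, where the listed conditions are precisely the negations of the per-factor lower bounds that drive the argument.

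First I would expand a single recurrent factor through the cell tree. Because $h_t$ is the root of the computational tree $T_t^{\pred}$ whose leaves contain several copies of $h_{t-1}$, the chain rule expresses $\frac{\partial h_t}{\partial h_{t-1}}$ as a sum, over those leaf copies, of products of per-edge Jacobians $D_e W_e$, where $D_e$ is the diagonal of the activation derivative at that edge, $W_e\in\{L,R\}$ is the associated weight matrix, and the binary-operation partials are folded in along the path to the root. I would then invoke the elementary iterated inequality $\|A_n\cdots A_1 v\|\ge\big(\prod_i \sigma_{\min}(A_i)\big)\|v\|$, applied to the transposed product $\big(\frac{\partial h_T}{\partial h_1}\big)^{\!\top}u$ and using $\sigma_{\min}(A^{\top})=\sigma_{\min}(A)$. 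Failure of the relevant conditions of Theorem \ref{thm:explodingapp} should yield $\sigma_{\min}\!\big(\frac{\partial h_t}{\partial h_{t-1}}\big)\ge\lambda_t$ with $\prod_{t}\lambda_t\not\to 0$, so that the recurrent product cannot collapse the vector $u$ to zero length.

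Next I would dispatch the two boundary factors. Failure of the corresponding conditions should guarantee that $\|u\|=\big\|\frac{\partial \mathcal{E}_T}{\partial h_T}\big\|$ is bounded below and that $J=\frac{\partial^+ h_1}{\partial \phi}$ has full effective rank with $\sigma_{\min}(J)$ bounded below, so that $u^{\top}\big(\frac{\partial h_T}{\partial h_1}\big)J\neq 0$ with norm at least $\sigma_{\min}(J)\cdot\big(\prod_t\lambda_t\big)\cdot\|u\|$. Chaining these three lower bounds produces a constant $c>0$ with $\big\|u^{\top}\big(\frac{\partial h_T}{\partial h_1}\big)J\big\|\ge c$ for all large $T$, which contradicts the vanishing hypothesis. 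Hence at least one condition of Theorem \ref{thm:explodingapp} must hold, which is the claim.

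The hard part will be the per-step lower bound $\sigma_{\min}\!\big(\frac{\partial h_t}{\partial h_{t-1}}\big)\ge\lambda_t$. Unlike the classical RNN case, this Jacobian is a \emph{sum} over tree paths rather than a single matrix product, and smallest singular values are not sub-additive, so the bound cannot be read off factor-by-factor from the individual edge matrices. The difficulty is compounded by the fact that $T_t^{\pred}$ is data- and time-dependent: the number of $h_{t-1}$-leaves, the path lengths, the active weight matrices, and the diagonal activation blocks all vary with $t$. The crux of the proof is thus to check that the conditions stated in Theorem \ref{thm:explodingapp} are formulated strongly enough that their simultaneous failure controls $\sigma_{\min}$ of this aggregated, $t$-varying map uniformly in $t$, delivering $\prod_t\lambda_t\not\to 0$; everything else reduces to the routine singular-value chaining above.
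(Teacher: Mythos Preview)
Your proposal rests on a misreading that is understandable because the short statement in the body contains a typo: the word ``vanishing'' there should be ``exploding'', as is clear from the theorem's title and from the full version (Theorem~\ref{thm:explodingapp}), which begins ``If the exploding gradient problem occurs\ldots''. The paper's own proof is explicitly omitted with the remark that it is ``similar to the proof of Theorem~\ref{thm:vanishing}'': one argues the contrapositive --- if every listed condition fails then each factor $\|u'\|$, $\|P\|$, $\|\partial o/\partial(L_iv_1)\|$ is strictly below $(N+1)^{-1/(3l_{\min})}$, so the per-edge Jacobian bound satisfies $C_0<(N+1)^{-1/l_{\min}}$, and the same path-sum estimate as in (\ref{eq:b1}) yields $\|\partial h_t/\partial h_{t-1}\|<1$ for all but finitely many $t$, whence the product stays bounded and cannot explode.

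Even setting the typo aside, your argument has a structural gap. You claim that the simultaneous failure of the conditions of Theorem~\ref{thm:explodingapp} yields a lower bound $\sigma_{\min}\!\big(\tfrac{\partial h_t}{\partial h_{t-1}}\big)\ge\lambda_t$. But each condition there is of the form $\|A\|\ge c$, i.e.\ a lower bound on the \emph{largest} singular value; its negation is $\|A\|<c$, an \emph{upper} bound on the spectral norm. From upper bounds on the spectral norms of the edge factors you obtain no control whatsoever on the smallest singular value of the aggregated Jacobian --- indeed those hypotheses are exactly what forces the product toward zero, not away from it. Likewise, the conditions say nothing about $\|\partial\mathcal{E}_T/\partial h_T\|$ or $\partial^+ h_1/\partial\phi$, so the ``boundary factor'' lower bounds you need simply are not available. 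The approach cannot be salvaged: once you read the hypothesis correctly as ``exploding'', the natural (and the paper's) route is the upper-bound chaining you already know from the vanishing case, not a $\sigma_{\min}$ argument.
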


    \section{Experimental Results} \label{sec:experiments}
        In this section, we present numerical results by comparing our algorithm with a GRU baseline model. The experiments are conducted on three datasets, and the source code is available at \url{http://after_accepted}.
        
        We test two versions of the RRNN algorithm. The first one is the full algorithm we presented in Algorithm \ref{algo:2}. The second one, which we call it RRNN-GRU, is a simplified version of the RRNN model where we limit the tree structure to be exactly the same as GRU. This model has a limited tree search space and the only dynamic component is the choice of the tuple $(L_i, R_i, b_i)$ to use on each pair of parent-child nodes, so the positioning of weights in the cell is flexible. Therefore, RRNN-GRU is still time-variant and data-dependent. In addition, we alternate between training the $L, R, b$ parameters and training the scoring neural network $\alpha$ consisting of a 2-layer fully connected neural network, while continuously training the output layer. The frequency (in epochs) that we switch training phases is set as a hyperparameter of the RRNN-GRU model. Due to the model architecture, training can sometimes be unstable with exploding gradients which we clip. The baseline model is the single layer GRU which has 100-dimensional hidden states.
        
        For both 100-dimensonal character and word embeddings, we used the pre-trained embedding vectors from GloVe\footnote{\url{https://nlp.stanford.edu/projects/glove/}}. The Adam optimizer is used for all experiments and random initial weights are selected. A random search on hyperparameters is used for all RRNN-GRU models and GRU models. We train the model parameters on the training set and select the optimal parameters and hyperparameters based on the performance measure on the validation set. Then we use this set of hyperparameters and the optimized model parameters to predict on the test set. We test the RRNN model only on the Wikipedia dataset. We report the performance on both validation and test sets for all datasets in Table \ref{table:performance} and list the optimal hyperparamers in Appendix \ref{appendix:hyperparameters}. Further details about the implementations are given in Appendix \ref{appendix:experimentsDetails}.
        \begin{table}
            \caption{Performance of models on three datasets}
            \label{table:performance}
            \centering
            \begin{tabular}{lcccc}
                \toprule
                \multirow{2}{*}{} &
                \multicolumn{2}{c}{Wiki-5k  (BPC)} & \multicolumn{2}{c}{Wiki-10k (BPC)}\\
                \cmidrule(lr){2-3} \cmidrule(lr){4-5} & Val & Test & Val & Test\\
                \midrule
                RRNN & \bf{2.58 (-5.5\%)} & \bf{2.63 (-1.9\%)} & -- & -- \\
                RRNN-GRU &-- & -- & \bf{2.43 (-5.8\%)} & \bf{2.42 (-5.8\%)} \\
                GRU & 2.73 & 2.68 & 2.58 & 2.57 \\
                \midrule
                \multirow{2}{*}{} &
                \multicolumn{2}{c}{SST (Accuracy)} & \multicolumn{2}{c}{PTB (Perplexity)}\\
                \cmidrule(lr){2-3} \cmidrule(lr){4-5} & Val & Test & Val & Test\\
                \midrule
                RRNN-GRU & \bf{65.1\% (-0.8\%)} & \bf{68.7\% (-5.2\%)} & 281 & 239  \\
                GRU & 64.6\% & 65.3 \% & \bf{247 (-12.1\%)} & 239 \\
                \bottomrule
            \end{tabular}
        \end{table}
        

    
        \subsection{Datasets and Settings}
            The {\it{Wikipedia}} task is to predict the next character on text drawn from the Hutter prize Wikipedia dataset\footnote{\url{https://cs.fit.edu/~mmahoney/compression/textdata.html}} \cite{hutter2004universal}. We remove all numbers, punctuation, XML tags, and markup characters so that 26 English characters and space are left in the raw text. Performance is measured using BPC (the smaller the better). For RRNN-GRU, we randomly select 10,000 20-character sequences for the training set, along with 1,000 sequences for validation and 2,000 for testing, such that no sequences overlap. For RRNN, the training set has 5,000 sequences while the validation and test sets remain of the same size.
            
            The Stanford Sentiment Treebank (SST) dataset\footnote{\url{https://nlp.stanford.edu/sentiment/treebank.html}} \cite{socher2013recursive} is a sentiment analysis task involving classifying one-sentence movie reviews as positive, negative, or neutral. We obtain the dataset from the \texttt{torchtext} package and use the full 8,544-sample training set, along with a randomly-chosen 1,000 samples for validation and 2,000 for testing. Since the training data has variable length, we prepend each sample with zeros to make each sample be the same length. The performance is measured in the accuracy of correctly predicting sentiments (the higher the better).
                
            We also perform word-level language modeling using the \textit{Penn Treebank} (PTB) dataset\footnote{\url{https://catalog.ldc.upenn.edu/LDC99T42}} \cite{marcus1993building}, a corpus containing articles from the Wall Street Journal. We obtain this dataset from the \texttt{torchtext} package and randomly select a 10,000 sample subset of 20 words each, along with 1,000 samples for validation and 2,000 for testing. We predict over all 10,001 unique words in our subset without eliminating uncommon words. The performance is measured in perplexity (the smaller the better).
            
        \subsection{Discussion}
            From Table \ref{table:performance} it is clear that RRNN-GRU outperforms GRU by 5.8\% on the Wikipedia dataset while RRNN improves the results of GRU by 5.5\% on validation set and 1.9\% on test set with a simple set of hyperparameters. On SST, RRNN-GRU also beats GRU by 0.8\% and 5.2\% on validation and test sets, respectively. These experiments show that the data-dependent structures do help improve the prediction power of the model and achieve better performance. Meanwhile, RRNN-GRU matches the performance of GRU on PTB. Its performance on the test set of PTB can be improved by a more dedicated hyperparameter search. 

            One interesting observation of the full RRNN model is the evolution of the predicted tree structures. Figure \ref{fig:balancedtree} of Appendix \ref{appendix:figures} shows the common tree structures we find at the beginning epochs while Figure \ref{fig:imbalancedtree} of Appendix \ref{appendix:figures} shows the common tree structures at later epochs (near the point where optimal performance is achieved on the validation set). The tree structures tend to be balanced in the beginning epochs since the structure of the ground-truth tree plays a significant role. In later epochs, the output layer dominates the predicting ability and therefore the model tends to feed simple $h_t$ to the output layer.
            
            Another interesting observation lies in the dynamics of the RRNN-GRU model. Let us denote $\calI_{e,i,t,j}$ to be the index of parameter tuple $(L,R,b)$ that the $j$-th internal node of $i$-th sample on the $t$-th time step in $e$-th epoch, and we further set $N_e \triangleq \sum_{i,t,j}\mathbbm{1}\left\{ \calI_{e,i,t,j} \neq \calI_{e-1,i,t,j} \right\}$ to measure the number of changes in the choice of parameter tuples between $(e-1)$-th epoch and $e$-th epoch. Then we should expect the quantity $N_e$ to be decreasing as $e$ increases since the model is expected to become more stable as the training goes on and the choice of indices of parameter tuples should also become more stable. Figure \ref{fig:N_b} in Appendix \ref{appendix:figures} shows the plot of $N_e$ vs epochs which supports our hypothesis.
    
    \subsubsection*{Acknowledgments}
        The authors would like to acknowledge and thank Intel for providing access to Intel’s Computing environment.

        


    \newpage
    \bibliographystyle{plain}
    \bibliography{ref}    
    
    \newpage
    \appendix
    \appendixpage
    \addappheadtotoc

    \begin{appendices}
    \section{Figures} \label{appendix:figures}
    	\begin{figure}[!h]
		    \centering
		    \includegraphics[width=.5\textwidth]{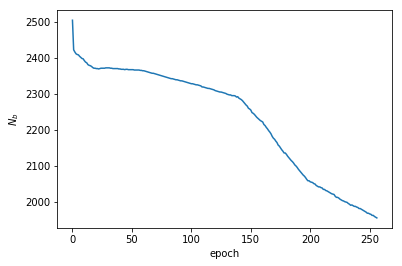}
		    \caption{The number of changes in the indices of parameter tuples vs epochs}
			\label{fig:N_b}
		\end{figure}
		
        \begin{figure}[!h]
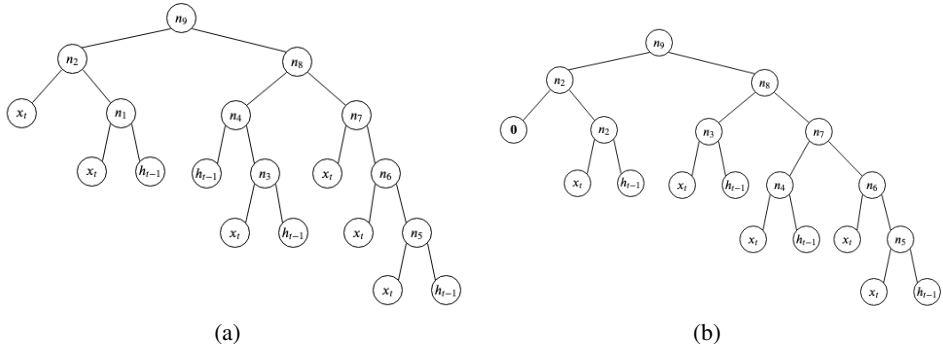

            \centering
            \subfigure[]{\includegraphics[width=.45\textwidth]{t1.png}}
            \subfigure[]{\includegraphics[width=.45\textwidth]{t2.png}}
            \caption{Example of tree structures at early stage of training.}
            \label{fig:balancedtree}
        \end{figure}
        
        \begin{figure}[!h]
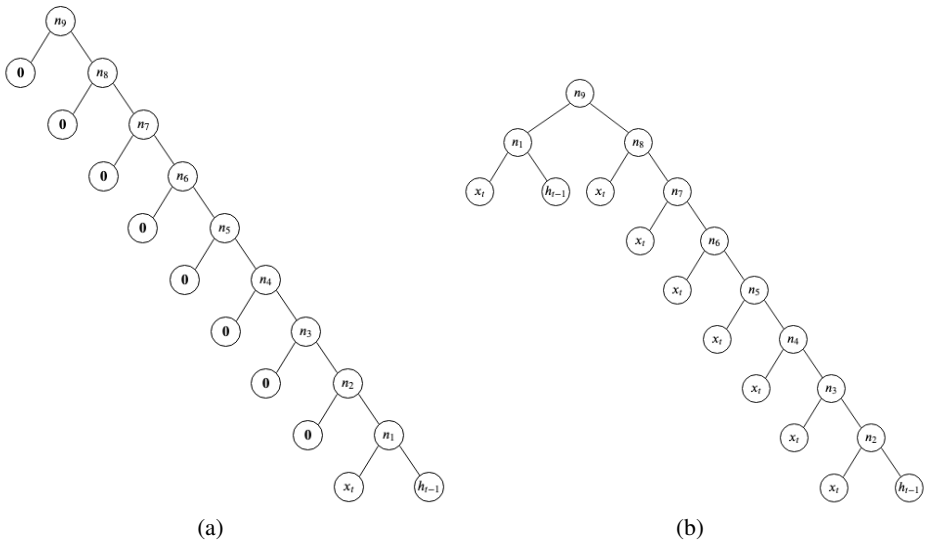

            \centering
            \subfigure[]{\includegraphics[width=.45\textwidth]{t3.png}}
            \subfigure[]{\includegraphics[width=.45\textwidth]{t4.png}}
            \caption{Example of tree structures near the optimal stage of training.}
            \label{fig:imbalancedtree}
        \end{figure}
		
    \newpage
    \section{Extensions of RRNN model} \label{sec:extension} 
	    The exposition so far handles the case where only one state vector transfers between hidden cells of the RRNN model, and it can capture the structure of GRU. However, LSTM, for example, has two state vectors $h_t$ and $c_t$ to transfer between cells. In this section we extend the RRNN model to be compatible with transferring multiple state vectors. Suppose that a total of $M$ vectors, $h_{t-1, 1}, \ldots, h_{t-1, M}$, are the output of the $(t-1)$-th hidden cell. The transition equations and cell output are thereby 
	    \begin{align*}
    	    &(T_{t, i}^{\pred}, h_{t, i}) = f^i(\calN_{0, i}^t) \quad i=1,2,\ldots,M, \\
    	    &q_t = g(x_t, h_{t, M}; \Gamma),
	    \end{align*}
	    where each $f^i \triangleq f_N^i \circ \cdots \circ f_1^i$ has the same definition as the function $f$ defined in (\ref{eq:f}), and $\calN_{0, i}^t$ is the multiset consisting of multiple copies of $x_t$, $h_{t-1, j}, j=1, \ldots, M$, $h_{t, j}, j=1, \ldots, i-1$, and possible other constant vectors. In practice, we use Algorithm \ref{algo:2} $M$ times to build functions $f^i, i=1,2,\ldots, M$. 
	   
	    The loss function is redefined as \begin{align*}
	        L(\Phi)=\mathbb{E}_{(X,Y)}\Bigg[\sum_{t=1}^T \bigg\{ \lambda_1 l(y_t,q_t)& +\lambda_2 \sum_{i=1}^M \min_{\bar{T}\in {\Iso}(T_{t, i}^{\target})} \mathrm{TD}(\bar{T},T_{t, i}^{\pred}) +\lambda_3 \sum_{i=1}^M\sum_{k=0}^{N-1} m(\calN_{k,i}^t)\bigg\}\Bigg] \\ & + \lambda_4 \sum_{\phi \in \Phi} \|\phi \|^2 ,
	    \end{align*}  
	    where $\calN_{k,i}^t = f_k^i \circ \cdots \circ f_1^i (\calN_{0, i}^t), k=1, 2, \ldots, N-1, i=1,2,\ldots, M$, and $T_{t,i}^{\target}$ is the ground truth binary tree.
	    
	    As an example, we show how to transfer two state vectors $c_t$ and $h_t$ between hidden cells of RRNN and mimic the structure of LSTM. To adhere with notation from prior works, we use $c_t$ and $h_t$ to replace $h_{t,1}$ and $h_{t,2}$ in the above general definition. The transition equations and cell output are therefore \begin{align*}
    	    &(T_{t, 1}^{\pred}, c_t) = f^1(\calN_{0, 1}^t(x_t, c_{t-1}, h_{t-1})), \\
    	    &(T_{t, 2}^{\pred}, h_t) = f^2(\calN_{0, 2}^t(x_t, c_{t-1}, h_{t-1}, c_t)), \\
    	    &q_t = g(x_t,h_{t}; \Gamma),
	    \end{align*}
	    where $\calN_{0, 1}^t(x_t, c_{t-1}, h_{t-1})$ consists of several copies of $x_t, h_{t-1}, c_{t-1}$ and possible other constant vectors, and $\calN_{0, 2}^t(x_t, c_{t-1}, h_{t-1}, c_t)$ consists of several copies of $x_t, c_{t-1}, h_{t-1}, c_t$ and possible other constant vectors.

    \section{Expressibility of RRNN} \label{appendix:expressibility}
        We extend the contect of Section \ref{sec:experiments} here. We first show that for a given set of vectors, there always exists a scoring function that can rank the scores of these vectors by any order we want. Formally, we have the following lemma.
    	\begin{lemma} \label{lemma:scoring}
    		Given $ n $ vectors $ v_1, \ldots, v_n \in \mathbb{R}^p $, there exists a function $ \alpha $ with a set of parameters $ \Theta $ such that $\alpha(v_1; \Theta) > \cdots > \alpha(v_n; \Theta)$.
    	\end{lemma}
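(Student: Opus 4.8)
The plan is to reduce the multivariate ordering problem to a one‑dimensional interpolation problem. Note first that the conclusion forces the $v_i$ to be pairwise distinct, since $\alpha(v_i;\Theta)=\alpha(v_j;\Theta)$ whenever $v_i=v_j$; this holds in the RRNN setting because candidate vectors generated with distinct weight tuples are generically distinct, and in any case we take it as a standing hypothesis. Fix any strictly decreasing target sequence $\tau_1>\tau_2>\cdots>\tau_n$, for instance $\tau_i=n+1-i$; it then suffices to exhibit $\alpha$ and $\Theta$ with $\alpha(v_i;\Theta)=\tau_i$ for every $i$.

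First I would choose a linear functional that separates the points. For each pair $i\neq j$ the set $H_{ij}=\{w\in\mathbb{R}^p : w^\top(v_i-v_j)=0\}$ is a proper hyperplane, because $v_i\neq v_j$, and hence is nowhere dense; since there are only finitely many such pairs, the set $\mathbb{R}^p\setminus\bigcup_{i<j}H_{ij}$ is nonempty (in fact dense). Pick any $w$ in this set and set $z_i=w^\top v_i$; then $z_1,\dots,z_n$ are $n$ distinct real numbers.

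Next I would interpolate in one variable. There is a univariate function $g:\mathbb{R}\to\mathbb{R}$ with $g(z_i)=\tau_i$ for all $i$ — for example the Lagrange interpolating polynomial of degree at most $n-1$ through the points $(z_i,\tau_i)$, or the continuous piecewise‑linear interpolant, which can be written as $g(y)=a_0+\sum_{i=1}^{n} a_i\,\max\{0,\,y-\beta_i\}$ and hence realized by a two‑layer fully connected network, matching the form of $\alpha$ used in the experiments. Setting $\alpha(v;\Theta)=g(w^\top v)$ with $\Theta=(w,\text{coefficients of }g)$ yields $\alpha(v_i;\Theta)=g(z_i)=\tau_i$, so $\alpha(v_1;\Theta)>\cdots>\alpha(v_n;\Theta)$, as required. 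An alternative, entirely elementary construction that bypasses the genericity step is the bump combination $\alpha(v)=\sum_{i=1}^n \tau_i\,\max\{0,\,1-\|v-v_i\|/\rho\}$ with $\rho=\tfrac13\min_{i\neq j}\|v_i-v_j\|>0$, for which $\alpha(v_i)=\tau_i$ directly since each bump is $1$ at its own center and $0$ at every other $v_j$.

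The only genuinely delicate points, neither of which is computational, are: (a) distinctness of the $v_i$ is necessary, so the lemma must be read with that hypothesis in force; and (b) $\alpha$ must be allowed to be \emph{nonlinear} — a purely linear score $w^\top v$ cannot realize an arbitrary prescribed order (this already fails for $v_1=0,\,v_2=1,\,v_3=-1$ in $\mathbb{R}$), which is precisely why the reduction routes through a generic linear projection \emph{composed with} a nonlinear univariate interpolant. Everything else — the hyperplane‑avoidance argument and the one‑dimensional interpolation — is routine.
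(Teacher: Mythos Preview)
Your proof is correct, but it differs from the paper's. The paper builds a single Gaussian radial‑basis combination
\[
\alpha(v;\Theta)=\sum_{k=1}^{n} k\,\exp\!\Big(-\tfrac{\|v-v_k\|^2}{2\sigma_0^2}\Big),\qquad \Theta=\{\sigma_0\},
\]
and then argues, by choosing $\sigma_0$ appropriately in terms of the minimum pairwise gap $\Delta=\tfrac12\min_{j\neq k}\|v_j-v_k\|^2$, that the resulting scores come out strictly decreasing in $i$. Your primary route instead collapses the problem to one dimension via a generic linear projection and finishes with Lagrange (or piecewise‑linear) interpolation; your alternative bump construction is morally the same idea as the paper's RBF sum but with compactly supported hats and a \emph{small} radius, so the bumps do not interact and the verification $\alpha(v_i)=\tau_i$ is immediate. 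What each buys: the paper's construction has a single scalar parameter and an explicit closed form, at the cost of a somewhat delicate overlap analysis; your arguments are more elementary and robust, your piecewise‑linear variant dovetails with the two‑layer fully connected scoring network actually used in the experiments, and your compact‑support variant makes the proof a one‑liner. Your remark that pairwise distinctness of the $v_i$ is an implicit hypothesis is also well taken --- the paper uses it silently when it sets $\Delta>0$.
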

    	
    	We next show that if we carefully choose sets $\calL, \calR, \calB, \calU, \calO, \mathcal{S}_t^{data}, \mathcal{S}_t^{prev},  \mathcal{S}_t^{aux}$, the quantity $N$, and the scoring function $\alpha$, then Algorithm \ref{algo:2} can replicate the GRU and LSTM equations. 
    	
    	To replicate GRU, we should have $N=8$, $ \mathcal{S}_t^{data} = \{x_t\}$, $ \mathcal{S}_t^{prev} = \{ h_{t-1}\}$, $ \mathcal{S}_t^{aux} = \{ \bf{0}\}$, where $\bf{0}$ is the zero vector. We further set 
        \begin{itemize}
			\item $ \calL = \{L_1, L_2, L_3, L_4\}$, where $L_1 = W_r, L_2 = W_z, L_3 = W_h$, and $L_4=I$,
			\item $ \calR = \{R_1, R_2, R_3, R_4\}$, where $R_1 = W_r^\prime, R_2 = W_z^\prime, R_3 = W_h^\prime$, and $R_4=I$,
			\item $ \calB = \{b_1, b_2, b_3, b_4\}$, where $b_1 = b_r, b_2 = b_z, b_3 = b_h$, and $b_4=\bf{0}$,
			\item $ \mathcal{U} = \left\{ \sigma(\cdot), \tanh(\cdot), \mathbf{1}-\cdot, \mathrm{id}(\cdot) \right\} $, where $\mathbf{1}$ stands for the all-ones vector and $\mathrm{id}$ stands for the identity mapping,
			\item $\mathcal{O} = \left\{ +, \odot \right\} $, where $\odot$ is the entry-wise multiplication.
		\end{itemize}
    	
    	Theorem \ref{thm:GRU} therefore becomes the following
    	
    	\begin{theorem} \label{thm:GRUapp}
    		There exists a scoring function $ \alpha $ such that Algorithm \ref{algo:2} generates GRU equations (\ref{eq:GRUeq1}) -- (\ref{eq:GRUeq4}) for the choice of $\calL, \calR, \calB, \calU, \calO, \mathcal{S}_t^{data}, \mathcal{S}_t^{prev}$, and $  \mathcal{S}_t^{aux}$ specified above.
    	\end{theorem}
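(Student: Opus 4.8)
The plan is to exhibit, by hand, one feasible eight-iteration run of Algorithm~\ref{algo:2} whose output tree is the GRU tree of Figure~\ref{fig:GRUtree1}, and then to invoke Lemma~\ref{lemma:scoring} to build a scoring function that forces the algorithm onto precisely that run. Fix an input pair $(x_t,h_{t-1})$ --- so that $r_t,z_t,\tilde h_t,h_t$ are all determined --- and use the sets listed above, in particular $\mathcal{S}_t^{aux}=\{\mathbf{0}\}$, the index-$4$ tuple $(L_4,R_4,b_4)=(I,I,\mathbf{0})$, $\calO=\{+,\odot\}$ and $\calU=\{\sigma,\tanh,\mathbf{1}-\cdot,\mathrm{id}\}$. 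I would take the parent produced at iteration $k$ to be $c_k^*$, where
\begin{align*}
c_1^* &= r_t, & c_2^* &= z_t,\\
c_3^* &= r_t\odot h_{t-1}, & c_4^* &= \tilde h_t,\\
c_5^* &= \mathbf{1}-z_t, & c_6^* &= z_t\odot h_{t-1},\\
c_7^* &= (\mathbf{1}-z_t)\odot\tilde h_t, & c_8^* &= h_t .
\end{align*}

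First I would check that this schedule is \emph{feasible}: for each $k$, the vector $c_k^*$ has the form $u(o(L_r c_i,R_r c_j)+b_r)$ for two \emph{distinct} nodes $c_i,c_j$ already present in $\mathcal{S}_t^{data}\cup\mathcal{S}_t^{prev}\cup\mathcal{S}_t^{aux}\cup\calP_{k-1}$. Concretely, $c_1^*,c_2^*$ come from $\{x_t,h_{t-1}\}$ with $(r,o,u)=(1,+,\sigma)$ and $(2,+,\sigma)$; $c_3^*$ from $\{h_{t-1},r_t\}$ with $(4,\odot,\mathrm{id})$; $c_4^*$ from $\{x_t,\,r_t\odot h_{t-1}\}$ with $(3,+,\tanh)$; $c_5^*$ from $\{\mathbf{0},z_t\}$ with $(4,+,\mathbf{1}-\cdot)$; $c_6^*$ from $\{h_{t-1},z_t\}$ with $(4,\odot,\mathrm{id})$; $c_7^*$ from $\{\tilde h_t,\,\mathbf{1}-z_t\}$ with $(4,\odot,\mathrm{id})$; and $c_8^*$ from $\{z_t\odot h_{t-1},\,(\mathbf{1}-z_t)\odot\tilde h_t\}$ with $(4,+,\mathrm{id})$. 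The zero vector in $\mathcal{S}_t^{aux}$ and the identity matrices in $\calL,\calR$ are exactly what allow the ``unary'' steps ($\mathbf{1}-z_t$, the two Hadamard products, the final sum) to appear as binary merges; the orientations match (\ref{eq:GRUeq1})--(\ref{eq:GRUeq4}), since $x_t$ carries the smallest node index and hence always the $L$-matrix, and the Hadamard/additive merges are orientation-free. Unfolding $c_8^*$ down to the initial nodes then reproduces the tree of Figure~\ref{fig:GRUtree1} with leaf multiset $\{x_t,x_t,x_t,x_t,h_{t-1},h_{t-1},h_{t-1},h_{t-1},h_{t-1},\mathbf{0}\}$; note that $z_t$ is consumed at both iterations $5$ and $6$, which is why $\bar{N}=8$ iterations suffice to build a tree with nine internal nodes. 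I would also record that $c_1^*,\dots,c_8^*$ are pairwise distinct and, for generic GRU weights, distinct from every other vector the algorithm enumerates (the degenerate case changes nothing, since one may identify coincident vectors).

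With the schedule fixed, the scoring function is immediate. Let $\mathcal{V}=\bigcup_{k=1}^{8}V_t^k$ be the finite set of all vectors Algorithm~\ref{algo:2} enumerates on this input, and apply Lemma~\ref{lemma:scoring} to $\mathcal{V}$ to obtain $\alpha(\cdot;\Theta)$ with
\[
\alpha(c_1^*;\Theta)>\alpha(c_2^*;\Theta)>\cdots>\alpha(c_8^*;\Theta)>\alpha(v;\Theta)
\]
for every $v\in\mathcal{V}\setminus\{c_1^*,\dots,c_8^*\}$. A short induction then finishes the argument: if $\calP_{k-1}=\{c_1^*,\dots,c_{k-1}^*\}$, then $c_k^*\in V_t^k$ by feasibility and $c_k^*\notin\calP_{k-1}$, while every other $v\in V_t^k\setminus\calP_{k-1}$ lies in $\mathcal{V}\setminus\{c_1^*,\dots,c_k^*\}$ and therefore scores strictly below $c_k^*$; hence the $\arg\max$ step of Algorithm~\ref{algo:2} returns $c_k^*$ and $\calP_k=\{c_1^*,\dots,c_k^*\}$. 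After $\bar{N}=8$ iterations the cell outputs $h_t=c_8^*$, the GRU hidden state, with $T_t^{\pred}$ the tree above, which is exactly (\ref{eq:GRUeq1})--(\ref{eq:GRUeq4}).

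The substance is entirely in the first step; producing $\alpha$ is a black box once $\mathcal{V}$ and the target order are known. Two points there need care. One: every step has to be a merge of two \emph{distinct} currently available nodes, which is what forces $\mathbf{0}$ into $\mathcal{S}_t^{aux}$ and the identity into $\calL$ and $\calR$, and which dictates the particular topological order above (e.g.\ $z_t$ must be produced before both $c_5^*$ and $c_6^*$). Two: at several iterations more than one remaining target is already buildable --- at iteration $6$, for instance, both $z_t\odot h_{t-1}$ and $(\mathbf{1}-z_t)\odot\tilde h_t$ belong to $V_t^6$ --- so a single global ranking must still select them in the intended order; ranking $c_1^*,\dots,c_8^*$ in \emph{reverse} order of selection and relying on the exclusion $v\notin\calP_{k-1}$ already built into Algorithm~\ref{algo:2} resolves this. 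One caveat is worth stating: the $\alpha$ constructed this way depends on the fixed input through $\mathcal{V}$, so the theorem is best read as ``for every input there is such an $\alpha$'', with the sequence of selections --- and hence the predicted tree --- the same for all inputs even though the numerical scores are not.
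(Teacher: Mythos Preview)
Your proof is correct and follows essentially the same approach as the paper's: exhibit an explicit eight-step schedule whose final node is $h_t$, verify each step is a legal merge from the available nodes, and then invoke Lemma~\ref{lemma:scoring} on the finite candidate set to produce a scoring function that enforces that schedule. The only cosmetic difference is that you build $\tilde h_t$ before $\mathbf{1}-z_t$ whereas the paper swaps these two steps (both are valid topological orders), and your closing remark about ranking ``in reverse order of selection'' is confusingly worded---your displayed inequality $\alpha(c_1^*)>\cdots>\alpha(c_8^*)$ is in \emph{forward} order of selection, which is what is actually needed and what the paper also uses.
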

    	
    	For LSTM, note that there are two state vectors $h_t$ and $c_t$. Therefore, to replicate LSTM (see equations (\ref{eq:LSTMeq1}) -- (\ref{eq:LSTMeq5}) below), we run Algorithm \ref{algo:2} twice. In the first run, we should have $N=7$, $ \mathcal{S}_t^{data} = \{x_t\}$, $ \mathcal{S}_t^{prev} = \{ h_{t-1}, c_{t-1}\}$, $ \mathcal{S}_t^{aux} = \{ \bf{0}\}$. We further set 
            \begin{itemize}
    			\item $ \calL = \{L_6\}$, where $L_6 = W_f, L_2 = W_i, L_3 = W_o, L_4 = W_c$, and $L_5=I$,
    			\item $ \calR = \{R_1, R_2, R_3, R_4, R_5\}$, where $R_1 = W_f^\prime, R_2 = W_i^\prime, R_3 = W_o^\prime, R_4 = W_c^\prime$, and $R_5=I$,
    			\item $ \calB = \{b_1, b_2, b_3, b_4, b_5\}$, where $b_1 = b_f, b_2 = b_i, b_3 = b_o, b_4 = b_c$, and $b_5=\bf{0}$,
    			\item $ \mathcal{U} = \left\{ \sigma(\cdot), \tanh(\cdot), \mathrm{id}(\cdot) \right\} $,
    			\item $\mathcal{O} = \left\{ +, \odot \right\} $.
    		\end{itemize}
    		
    	In the second run, we should have $N=2$, $ \mathcal{S}_t^{data} = \{x_t\}$, $ \mathcal{S}_t^{prev} = \{ h_{t-1}, c_{t-1}, c_t\}$, $ \mathcal{S}_t^{aux} = \{ \bf{0}\}$. We further set $\calL = \{L_6\}, \calR = \{R_6\}$, where $L_6 = R_6 = I$, $\calB = \{b_6\}$, where $b_6 = \bf{0}$, $\calU = \{\tanh(\cdot), \mathrm{id}(\cdot)\}$, and $\calB = \{+, \odot\}$. Theorem \ref{thm:LSTM} therefore becomes the following
        \begin{theorem} \label{thm:LSTMapp}
        	There exists a scoring function $ \alpha $ such that Algorithm \ref{algo:2} (applied twice) generates the following LSTM equations \begin{align}
        		f_t &= \sigma \left(W_f x_{t} + W_f' h_{t-1} + b_f\right) \label{eq:LSTMeq1}\\
        		i_t &= \sigma \left(W_i x_{t} + W_i' h_{t-1} + b_i\right)\\
        		o_t &= \sigma \left(W_o x_{t} + W_o' h_{t-1} + b_o\right)\\
        		c_t &= c_{t-1} \odot f_t + i_t \odot \tanh\left(W_c x_{t} + W_c' h_{t-1} + b_c\right) \label{eq:LSTMeq4}\\
        		h_t &= o_t \odot \tanh(c_t)	\label{eq:LSTMeq5}
    		\end{align}
    		for the choice of $\calL, \calR, \calB, \calU, \calO, \mathcal{S}_t^{data}, \mathcal{S}_t^{prev}$, and $  \mathcal{S}_t^{aux}$ specified above. 
        \end{theorem}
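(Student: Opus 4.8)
The plan is to follow the same two ingredients used for Theorem~\ref{thm:GRUapp}: (i) exhibit an explicit topological \emph{schedule} of binary merges --- one for each of the two runs of Algorithm~\ref{algo:2} --- that reproduces the LSTM equations step by step, and (ii) invoke Lemma~\ref{lemma:scoring} to build a single scoring function $\alpha(\cdot;\Theta)$ that forces Algorithm~\ref{algo:2} to follow that schedule.

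For the first run I would write equations~(\ref{eq:LSTMeq1})--(\ref{eq:LSTMeq4}) as seven merges: iterations $1$--$4$ produce the gates $f_t,i_t,o_t$ and the candidate state $\tilde c_t=\tanh(W_c x_t+W_c'h_{t-1}+b_c)$, each of the form $u(o(Lc_i,Rc_j)+b)$ with $o=+$, $c_i=x_t$, $c_j=h_{t-1}$, and $(L,R,b,u)$ drawn from the sets specified before the theorem; iteration $5$ produces $c_{t-1}\odot f_t$ using $o=\odot$, $L=R=I$, $b=\mathbf{0}$, $u=\mathrm{id}$; iteration $6$ produces $i_t\odot\tilde c_t$ the same way; and iteration $7$ produces $c_t=(c_{t-1}\odot f_t)+(i_t\odot\tilde c_t)$ with $o=+$. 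I then need to check that this ordering is genuinely topological, i.e.\ that at the start of iteration $k$ both children of the $k$-th target node already lie in $\calS_t^{data}\cup\calS_t^{prev}\cup\calS_t^{aux}\cup\calP_{k-1}$, so that the target node occurs in the candidate pool $V_t^k$ of that iteration. For the second run, with $c_t$ (and $o_t$) now available in $\calS_t^{prev}$, I use two merges: iteration $1$ gives $\tanh(c_t)=\tanh(o(Ic_t,I\mathbf{0})+\mathbf{0})$ with $u=\tanh$, $o=+$, and iteration $2$ gives $h_t=o_t\odot\tanh(c_t)$ with $o=\odot$, $L=R=I$, $b=\mathbf{0}$, $u=\mathrm{id}$, matching~(\ref{eq:LSTMeq5}).

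For the scoring function, the key observation is that in each run the union $V=\bigcup_{k=1}^{\bar N}V_t^k$ of all candidate pools ever produced is a \emph{finite} set of vectors in $\mathbb{R}^p$ (the parameter and operation sets are finite and $|\calP_{k-1}|<\bar N$). Writing $c_1^*,\dots,c_{\bar N}^*$ for the target nodes in schedule order, Lemma~\ref{lemma:scoring} yields an $\alpha$ whose values on $V$ are strictly decreasing along the list $c_1^*,\dots,c_{\bar N}^*$ followed by all remaining vectors of $V$ in any order. A short induction on $k$ then shows Algorithm~\ref{algo:2} reproduces the schedule: assuming $\calP_{k-1}=\{c_1^*,\dots,c_{k-1}^*\}$, the admissible candidates at iteration $k$ are $V_t^k\setminus\calP_{k-1}$, and $c_k^*$ is their strict $\alpha$-maximizer because every vector ranked above it globally is one of the already-collected $c_1^*,\dots,c_{k-1}^*$ and hence excluded. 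Thus $c_{\bar N}^*$ equals $c_t$ in run $1$ and $h_t$ in run $2$, and $q_t=g(x_t,h_t;\Gamma)$ as required; one $\alpha$ can be made to serve both runs by applying the lemma to the two schedules' candidate pools concatenated.

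The hard part is not any individual step but the simultaneity: one fixed $\alpha$ must single out the correct parent at every iteration, against an ever-growing candidate pool, and in particular must never let some unintended combination (an off-pattern gate, or a re-derivation of an already-built vector) outrank the intended one. Finiteness of $V$ together with Lemma~\ref{lemma:scoring} dissolves this once the schedule is known to be topological, so the real work in the write-up is the bookkeeping: verifying that each of the nine LSTM sub-expressions is expressible with the restricted $\calO=\{+,\odot\}$, the listed $\calU$, and the listed weight matrices, and that the dependency/availability constraints hold across both runs --- in particular that $o_t$, produced in the first run, is among the vectors handed to the second. (One harmless edge case: if two target vectors coincide as elements of $\mathbb{R}^p$ the strict ordering of Lemma~\ref{lemma:scoring} degenerates, but then the choice between them does not affect the resulting tree, so one may assume generic weights or simply identify equal vectors.)
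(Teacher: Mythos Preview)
Your proposal follows exactly the paper's approach: exhibit an explicit merge schedule for each of the two runs and then invoke Lemma~\ref{lemma:scoring} on the finite candidate pool to manufacture a scoring function that enforces the schedule. The paper's own proof is literally one sentence (``almost the same as the proof above''), so your write-up is strictly more detailed while remaining faithful to the intended argument.

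One point deserves emphasis. You correctly flag the availability of $o_t$ in the second run as something to verify. Under the paper's \emph{stated} setup for run~2 --- $\mathcal{S}_t^{prev}=\{h_{t-1},c_{t-1},c_t\}$, $\calL=\calR=\{I\}$, $\calB=\{\mathbf{0}\}$, $\calU=\{\tanh,\mathrm{id}\}$ --- the vector $o_t$ is neither handed over nor reconstructible (no $\sigma$, no $W_o,W_o'$, and only $N=2$ merges), so your iteration~2 of run~2 cannot be executed as written. This is not a flaw in your strategy; it is a gap in the paper's specified inputs that its one-line proof does not address. In your write-up you should either (i) note that the second run's input set must be enlarged to include $o_t$ (consistent with the general extension in Appendix~\ref{sec:extension}, which allows ``possible other constant vectors''), or (ii) keep the first run's $\calL,\calR,\calB,\calU$ available in the second run so that $o_t$ can be rebuilt --- but then $N=2$ is too small. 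Option (i) is the cleaner fix and matches the spirit of the construction.
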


        \subsection{Proof of Lemma \ref{lemma:scoring}} \label{appendix:lemma1}
    		Consider function \begin{equation*}
    		    \alpha(v; \Theta) = \sum_{k=1}^{n} k \exp\left(-\frac{\|v-v_k\|^2}{2\sigma_0^2}\right)
    		\end{equation*}
    		with $ \Theta = \{\sigma_0\} $, and $\sigma_0$ is a large enough constant such that $\sigma_0^2 \ge \frac{\Delta}{\log(n^2)-\log(n^2-1)}$, and $ \Delta = \frac{1}{2}\min_{j\neq k} \|v_j-v_k\|^2$.
    		
    		For $0 \le i \le n-1$, since $-\frac{\|v_{i+1}-v_k\|^2}{2\sigma_0^2}\le 0$, we have $$ \alpha(v_{i+1}; \Theta) = \sum_{k=1}^{n} k \exp\left(-\frac{\|v_{i+1}-v_k\|^2}{2\sigma_0^2}\right) \le \sum_{\substack{1\le k \le n \\ k\neq {i+1}}} k = \frac{n(n+1)}{2}-(i+1)\,.$$
    		
    		On the other hand, for a fixed $1\le i \le n$ and all $1\le k \le n$, we have $\exp\left(-\frac{\|v_i-v_k\|^2}{2\sigma_0^2}\right) \ge \exp\left(-\frac{\Delta}{\sigma_0^2}\right) \ge \exp\left( \log(n^2-1)-\log(n^2)\right) = 1-\frac{1}{n^2} \ge 1-\frac{1}{nk} $, and therefore \begin{align*}
    		    \alpha(v_i; \Theta) &= \sum_{k=1}^{n} k \exp\left(-\frac{\|v_i-v_k\|^2}{2\sigma_0^2}\right) \ge \sum_{\substack{1\le k \le n \\ k\neq i}} k\left(1-\frac{1}{nk}\right) \\
    		    &= \frac{n(n+1)}{2}-i-\frac{n-1}{n} > \frac{n(n+1)}{2}-(i+1)\,.
    		\end{align*}
    
    		In conclusion, for $1\le i \le n-1$, we have  $$\alpha(v_{i+1}; \Theta)\le \frac{n(n+1)}{2}-(i+1) < \alpha(v_i; \Theta)\,,$$
    		and thus $\alpha(v_1; \Theta) > \cdots > \alpha(v_n; \Theta)$.
    		
        \subsection{Proof of Theorem \ref{thm:GRUapp} and \ref{thm:LSTMapp}}
            We start with the proof of Theorem \ref{thm:GRUapp}. In Algorithm \ref{algo:2}, we set $N=8$, $ \mathcal{S}_t^{data} = \{x_t\}$, $ \mathcal{S}_t^{prev} = \{ h_{t-1}\}$, $ \mathcal{S}_t^{aux} = \{ \bf{0}\}$, where $\bf{0}$ is the zero vector. In the following, ``the algorithm'' refers to Algorithm \ref{algo:2}. The scoring function $\alpha$ has a set of parameters $\Theta$ and is capable of sorting the scores of different vectors. We show the existence of this function at the end of the proof by relying on Lemma \ref{lemma:scoring}. 
        	
        	We start with $\calP_0 = \emptyset$. For $k=1$, the algorithm generates the vector set $ V_t^1 $ and one of its elements is $ r_t \triangleq \sigma \left(L_1 x_{t} + R_1 h_{t-1} + b_1\right) = \sigma \left(W_r x_{t} + W_r^\prime h_{t-1} + b_r\right) $. The scoring function $\alpha$ guarantees that $\alpha(r_t; \Theta) > \alpha(v; \Theta), \forall v \in V_t^1, v\neq r_t$. Therefore, we have $c_1^* = r_t$ and $\calP_1 = \{r_t\}$. Similarly, the algorithm finds $z_t$ to be the vector with the highest score in the set $V_t^2\setminus\calP_1$. We have $c_2^* = z_t$ and $\calP_2 = \{r_t, z_t\}$.
        	
        	For $k=3$, the algorithm generates the vector set $ V_t^3 $ and one of its elements is $ \widetilde{r}_t \triangleq \mathrm{id}[(L_4 h_{t-1}) \odot (R_4 r_t) + b_4] = r_t \odot h_{t-1}$. The scoring function $\alpha$ guarantees that $\alpha(\widetilde{r}_t; \Theta) > \alpha(v; \Theta), \forall v \in V_t^3\setminus\calP_2, v\neq \widetilde{r}_t$. Therefore, we have $c_3^* =\widetilde{r}_t$ and $\calP_3 = \{r_t, z_t, \widetilde{r}_t\}$.
        	
        	For $k=4$, the algorithm generates the vector set $ V_t^4 $ and one of its elements is $ \mathbf{1} - z_t = \mathbf{1} - (L_4 \mathbf{0} + R_4 z_{t} + b_4)$. The scoring function $\alpha$ guarantees that $\alpha(\mathbf{1}-z_t; \Theta) > \alpha(v; \Theta), \forall v \in V_t^4\setminus\calP_3, v\neq \mathbf{1}-z_t$. Therefore, we have $c_4^* = \mathbf{1}-z_t$ and $\calP_4 = \{r_t, z_t, \widetilde{r}_t, \mathbf{1}-z_t\}$.
        	
        	For $k=5$, the algorithm generates the vector set $ V_t^5 $ and one of its elements is $ \widetilde{h}_t \triangleq \tanh(L_3 x_{t} + R_3 \widetilde{r}_t + b_3) = \tanh(W_h x_{t} + W_h^\prime (r_t \odot h_{t-1}) + b_h) $. The scoring function $\alpha$ guarantees that $\alpha(\widetilde{h}_t; \Theta) > \alpha(v; \Theta), \forall v \in V_t^5\setminus\calP_4, v\neq \widetilde{h}_t$. Therefore, we have $c_5^* =\widetilde{h}_t$ and $\calP_5 = \{r_t, z_t, \widetilde{r}_t, \mathbf{1}-z_t, \widetilde{h}_t\}$.
        	
        	For $k=6$, the algorithm generates the vector set $ V_t^6 $ and one of its elements is $ z_t \odot h_{t-1} = \mathrm{id}[(L_4 h_{t-1}) \odot (R_4 z_{t}) + b_4]$. The scoring function $\alpha$ guarantees that $\alpha(z_t \odot h_{t-1}; \Theta) > \alpha(v; \Theta), \forall v \in V_t^6\setminus\calP_5, v\neq z_t \odot h_{t-1}$. Therefore, we have $c_6^* = z_t \odot h_{t-1}$ and $\calP_6 = \{r_t, z_t, \widetilde{r}_t, \mathbf{1}-z_t, \widetilde{h}_t, z_t \odot h_{t-1}\}$. Similarly, the algorithm finds $(\mathbf{1}-z_t)\odot \widetilde{h}_t$ to be the vector with the highest score in the set $V_t^7\setminus\calP_6$. Thus we have $c_7^* = (\mathbf{1}-z_t)\odot \widetilde{h}_t$ and $ \calP_7 = \{r_t, z_t, \widetilde{r}_t, \mathbf{1}-z_t, \widetilde{h}_t, z_t \odot h_{t-1}, (\mathbf{1}-z_t)\odot \widetilde{h}_t\}$
        	
        	Finally, for $k=8$, the algorithm generates the vector set $ V_t^8 $ and one of its elements is $ h_t \triangleq \mathrm{id}[(L_4 (z_t\odot h_{t-1})) + (R_4 ((1-z_t)\odot \widetilde{h}_t)) + b_4] = z_t\odot h_{t-1} + (1-z_t)\odot \widetilde{h}_t$. The scoring function $\alpha$ guarantees that $\alpha(h_t; \Theta) > \alpha(v; \Theta), \forall v \in V_t^8\setminus\calP_7, v\neq h_t$. Therefore, we have $c_8^* = h_t$ and $\calP_8 = \{r_t, z_t, \widetilde{r}_t, \mathbf{1}-z_t, \widetilde{h}_t, z_t \odot h_{t-1}, (\mathbf{1}-z_t)\odot \widetilde{h}_t, h_t\}$.
    		
            It remains to specify the scoring function $\alpha$. Note that $\alpha$ should satisfy that $\alpha(c_i^*; \Theta) > \alpha(v; \Theta), \forall v \in V_t^i\setminus\calP_{i-1}, v \neq c_i^*$ for $1 \le i \le 8$. Since each set $V_t^i$ contains a finite number of vectors, Lemma \ref{lemma:scoring} guarantees that such scoring function $\alpha$ exists. Therefore, the algorithm returns vector $h_t$ and the binary tree rooted at $h_t$ as the output, and thus it replicates the GRU equations (\ref{eq:GRUeq1}) -- (\ref{eq:GRUeq4}). \qed
    
            The proof of Theorem \ref{thm:LSTMapp} is almost the same as the proof above. The only difference is that in the first run of Algorithm \ref{algo:2}, we generate equations (\ref{eq:LSTMeq1}) -- (\ref{eq:LSTMeq4}), while in the second run of Algorithm \ref{algo:2}, we generate equation (\ref{eq:LSTMeq5}).
    
    \section{Gradient Control} \label{appendix:gradientcontrol}
        In this section, we first give the full statements of Theorem \ref{thm:vanishing} and \ref{thm:exploding}. Then we give proofs for these two theorems and discuss about them.
        
    	\begin{theorem}[Sufficient condition of gradient vanishing] \label{thm:vanishingapp} Let $\calV_t$ to be the set of vectors on the nodes of predicted tree $T_t^\pred$. Assume that there exist constants $C_1, C_2, C_3, C_4$ such that for all $1\le t \le T$,
    	    \begin{align}
                &\|L\| \le C_1, \forall L\in \calL, \|R\| \le C_1, \forall R\in \calR, \label{eq:vanishCond1} \\
                &\|u^\prime\|_{\infty} \le C_2, \forall u \in \calU, \label{eq:vanishCond2} \\
                &\left\|\frac{\partial o(L_i v_1, R_iv_2)}{\partial (L_i v_1)}\right\| \le C_3, \left\|\frac{\partial o(L_i v_1, R_iv_2)}{\partial (R_i v_2)}\right\| \le C_3,  1\le i \le n_l, \forall v_1, v_2 \in \calV_t, v_1 \neq v_2, \forall o \in \mathcal{O}, \label{eq:vanishCond3} \\
                & \left\|\frac{\partial \calE_t}{\partial h_t}\right\| \le C_4, t = 0,1, \dots, \label{eq:vanishCond4} \\
                &C_1 C_2 C_3 < \frac{1}{2}. \label{eq:vanishCond5} 
            \end{align}
    	    Under conditions (\ref{eq:vanishCond1}) -- (\ref{eq:vanishCond5}), we have $\left\| \frac{\partial \calE_T}{\partial h_T} \frac{\partial h_T}{\partial h_{1}} \frac{\partial^+ h_{1}}{\partial \phi} \right\| \rightarrow 0$ as $T\rightarrow +\infty$, i.e., the vanishing gradient problem occurs.
    	\end{theorem}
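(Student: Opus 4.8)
The plan is to show that the middle factor $\left\|\frac{\partial h_T}{\partial h_1}\right\|$ decays geometrically in $T$ while the two outer factors remain bounded. First I would unroll the recurrence by the chain rule, $\frac{\partial h_T}{\partial h_1}=\prod_{t=2}^{T}\frac{\partial h_t}{\partial h_{t-1}}$, so that it suffices to establish a uniform one–step contraction estimate $\left\|\frac{\partial h_t}{\partial h_{t-1}}\right\|\le 2C_1C_2C_3$ for every $t$. By condition (\ref{eq:vanishCond5}) this constant is strictly below $1$, hence $\left\|\frac{\partial h_T}{\partial h_1}\right\|\le (2C_1C_2C_3)^{T-1}\to 0$.

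For the one–step estimate I would work inside a single cell, viewing $h_t$ as the value at the root of the full binary computational tree $T_t^{\pred}$ produced by Algorithm \ref{algo:2}; when a vector is reused as a child I regard the tree as fully unfolded, so that it is genuinely a full binary tree whose leaves carry copies of $x_t$, $h_{t-1}$, and constant vectors. For an internal node with value $v_n=u(o(Lv_{n_L},Rv_{n_R})+b)$ and children $n_L,n_R$, differentiating gives $\frac{\partial v_n}{\partial h_{t-1}}=\mathrm{diag}(u'(\cdot))\big[\frac{\partial o}{\partial(Lv_{n_L})}L\frac{\partial v_{n_L}}{\partial h_{t-1}}+\frac{\partial o}{\partial(Rv_{n_R})}R\frac{\partial v_{n_R}}{\partial h_{t-1}}\big]$, and applying (\ref{eq:vanishCond1})--(\ref{eq:vanishCond3}) (the last with $v_1=v_{n_L},v_2=v_{n_R}\in\calV_t$) yields the recursion $\big\|\frac{\partial v_n}{\partial h_{t-1}}\big\|\le C_1C_2C_3\big(\big\|\frac{\partial v_{n_L}}{\partial h_{t-1}}\big\|+\big\|\frac{\partial v_{n_R}}{\partial h_{t-1}}\big\|\big)$. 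At a leaf the Jacobian with respect to $h_{t-1}$ is the identity if the leaf equals $h_{t-1}$ and zero otherwise, so unrolling the recursion over the tree gives $\big\|\frac{\partial h_t}{\partial h_{t-1}}\big\|\le\sum_{\ell}(C_1C_2C_3)^{d(\ell)}$, the sum being over leaves $\ell$ whose value is $h_{t-1}$ and $d(\ell)$ their depth (root at depth $0$).

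The key step, and the one I expect to be the main obstacle, is turning this sum into the clean bound $2C_1C_2C_3$ even though the number of $h_{t-1}$-leaves is generally larger than one, so that a naive count would not give a factor below $1$. For this I would use the Kraft identity for full binary trees, $\sum_{\ell\in\mathrm{leaves}(T_t^{\pred})}2^{-d(\ell)}=1$ (an easy induction on the tree), together with the fact that every leaf has depth $d(\ell)\ge 1$ since $h_t=c_{\bar N}^{*}$ is an internal node, and with $2C_1C_2C_3<1$ from (\ref{eq:vanishCond5}). Writing $(C_1C_2C_3)^{d(\ell)}=(2C_1C_2C_3)^{d(\ell)}2^{-d(\ell)}\le(2C_1C_2C_3)\,2^{-d(\ell)}$ and summing over all leaves then gives $\big\|\frac{\partial h_t}{\partial h_{t-1}}\big\|\le 2C_1C_2C_3$, uniformly in $t$.

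Finally I would assemble the pieces. By (\ref{eq:vanishCond4}) the left factor satisfies $\big\|\frac{\partial\calE_T}{\partial h_T}\big\|\le C_4$; the direct gradient $\frac{\partial^+ h_1}{\partial\phi}$ is bounded by a constant depending only on the fixed-size cell at $t=1$, the bounds (\ref{eq:vanishCond1})--(\ref{eq:vanishCond3}), and boundedness of $x_1$; and the middle factor is $\le(2C_1C_2C_3)^{T-1}$. Multiplying the three bounds, $\big\|\frac{\partial\calE_T}{\partial h_T}\frac{\partial h_T}{\partial h_1}\frac{\partial^+ h_1}{\partial\phi}\big\|\le C_4\,(2C_1C_2C_3)^{T-1}\big\|\frac{\partial^+ h_1}{\partial\phi}\big\|\to 0$ as $T\to\infty$, which is the claimed vanishing-gradient conclusion.
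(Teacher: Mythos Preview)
Your proposal is correct and follows the same high-level structure as the paper: unroll $\frac{\partial h_T}{\partial h_1}$ as a product of one-step Jacobians, bound each parent--child Jacobian inside a cell by $C_0:=C_1C_2C_3$ via the chain rule (this is the paper's Lemma~\ref{lemma:nodechild}), and then reduce $\big\|\frac{\partial h_t}{\partial h_{t-1}}\big\|$ to a sum of $C_0^{d(\ell)}$ over leaves of the full binary tree $T_t^{\pred}$.

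The one genuine difference is how you control $\sum_{\ell}C_0^{d(\ell)}$. The paper proves by induction on $N$ (Lemma~\ref{lemma:length}) that this sum is at most $C_0^{N}+\sum_{k=1}^N C_0^{k}$, and then bounds this by $1-(\tfrac{1}{2}-C_0)=\tfrac{1}{2}+C_0$. You instead invoke the Kraft identity $\sum_{\ell}2^{-d(\ell)}=1$ for full binary trees together with $d(\ell)\ge 1$ and $2C_0<1$ to obtain $\sum_{\ell}C_0^{d(\ell)}\le 2C_0$. Both give a uniform constant strictly below~$1$, but your route is shorter, avoids the induction entirely, and actually yields the sharper contraction factor $2C_0\le \tfrac{1}{2}+C_0$. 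Your remark about fully unfolding reused child vectors so that the computational graph is a genuine full binary tree is exactly what is needed for both the per-edge bound and the Kraft identity to apply, and the paper relies on the same unfolded representation implicitly.
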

    	
    	\begin{theorem}[Necessary condition of gradient exploding, restated] \label{thm:explodingapp}
        	    Let $l_{\min} \triangleq \lfloor \log_2(N+1)\rfloor +1$ be the minimum possible depth of all full binary trees $T_t^\pred, 1\le t \le T$. If the exploding gradient problem occurs, then at least one of the following conditions hold:
        	    \begin{itemize}[noitemsep]
                    \item there exists an activation function $u\in \calU$ such that $\|u^\prime\| \ge (N+1)^{-\frac{1}{3l_{\min}}}$,
                    \item there exists a parameter matrix $P \in \calL\cup\calR$ such that $\|P\| \ge (N+1)^{-\frac{1}{3l_{\min}}}$,
                    \item for infinite many $t$, there exists a pair of parent-child nodes $(v, v_1)$ in the tree $T_t^\pred$ such that $\left\|\frac{\partial o(L_i v_1, R_iv_2)}{\partial (L_i v_1)}\right\| \ge (N+1)^{-\frac{1}{3l_{\min}}}$, where $v = u(o(L_iv_1, R_iv_2)+b_i)$,
                    \item for infinite many $t$, there exists a pair of parent-child nodes $(v, v_2)$ in the tree $T_t^\pred$ such that $\left\|\frac{\partial o(L_i v_1, R_iv_2)}{\partial (R_i v_2)}\right\| \ge (N+1)^{-\frac{1}{3l_{\min}}}$, where $v = u(o(L_iv_1, R_iv_2)+b_i)$.
                \end{itemize}        	    
                
            \end{theorem}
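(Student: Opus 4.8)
The plan is to prove the contrapositive: assume that \emph{none} of the four listed conditions holds, and deduce that the exploding gradient problem cannot occur — indeed that $\bigl\|\tfrac{\partial\calE_T}{\partial h_T}\tfrac{\partial h_T}{\partial h_1}\tfrac{\partial^+h_1}{\partial\phi}\bigr\|\to 0$ as $T\to\infty$. Negating the four conditions yields a single constant $\beta_0:=(N+1)^{-1/(3l_{\min})}<1$ with: $\|u'\|_\infty<\beta_0$ for all $u\in\calU$; $\|P\|<\beta_0$ for all $P\in\calL\cup\calR$; and for all but finitely many $t$ — call such $t$ \emph{good} — every Jacobian $\partial o(L_iv_1,R_iv_2)/\partial(L_iv_1)$ and $\partial o(L_iv_1,R_iv_2)/\partial(R_iv_2)$ occurring in the tree $T_t^{\pred}$ has operator norm $<\beta_0$.

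First I would reduce the statement to a per-cell estimate. Through the recurrence, $\tfrac{\partial h_T}{\partial h_1}=\prod_{t=2}^{T}\tfrac{\partial h_t}{\partial h_{t-1}}$, so $\bigl\|\tfrac{\partial h_T}{\partial h_1}\bigr\|\le\prod_{t=2}^{T}\bigl\|\tfrac{\partial h_t}{\partial h_{t-1}}\bigr\|$; the factor $\tfrac{\partial^+h_1}{\partial\phi}$ is determined by the first cell alone, and $\tfrac{\partial\calE_T}{\partial h_T}$ is the gradient of a fixed finite composition attached to the $T$-th cell and stays controlled under the negated conditions. Hence it suffices to produce a constant $c<1$ with $\bigl\|\tfrac{\partial h_t}{\partial h_{t-1}}\bigr\|\le c$ for every good $t$: the product over the cofinitely many good steps then decays geometrically and swamps the finitely many remaining factors.

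The core is this per-cell contraction. Fix a good $t$ and differentiate $T_t^{\pred}$. For an internal node $v=u(o(L_iv_1,R_iv_2)+b_i)$, its Jacobian with respect to a child is $\mathrm{diag}(u'(\cdot))\cdot\tfrac{\partial o}{\partial(L_iv_1)}\cdot L_i$ (symmetrically for the right child), whose operator norm is below $\beta_0^3=(N+1)^{-1/l_{\min}}$. Since $h_{t-1}$ occupies at most $N$ leaves, $\tfrac{\partial h_t}{\partial h_{t-1}}$ is the sum over those leaves $\ell$ of the products of per-edge Jacobians along the root-to-$\ell$ paths, giving
\[
\Bigl\|\tfrac{\partial h_t}{\partial h_{t-1}}\Bigr\|\;\le\;\sum_{\ell\in\mathrm{leaves}(T_t^{\pred})}\bigl((N+1)^{-1/l_{\min}}\bigr)^{\mathrm{depth}(\ell)}.
\]
Because $T_t^{\pred}$ is a full binary tree whose depth is at least $l_{\min}$, a Kraft-type count of the leaf depths bounds the right side by a constant $c<1$ independent of $t$, completing the reduction.

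The main obstacle is precisely this combinatorial estimate. The per-edge factor $(N+1)^{-1/l_{\min}}$ sits only just below the $\tfrac12$ threshold that makes the crude Kraft identity $\sum_\ell 2^{-\mathrm{depth}(\ell)}=1$ useful, so obtaining a \emph{strict} inequality requires exploiting the shape of full binary trees more delicately — combining the exact leaf identity with the lower bound $l_{\min}$ on the tree depth (and with the existence of a leaf at depth $\ge l_{\min}$) — and the bound must hold uniformly over the data-dependent trees $T_t^{\pred}$, whose topologies and whose placements of copies of $h_{t-1}$ vary arbitrarily with $t$. The remaining steps — checking that $\tfrac{\partial\calE_T}{\partial h_T}$ stays controlled and that the finitely many non-good cells are harmless — are routine by comparison.
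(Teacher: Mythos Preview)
Your overall architecture --- prove the contrapositive, factor $\partial h_T/\partial h_1$ as a product of per-cell Jacobians, and bound each per-cell Jacobian by summing products of edge-Jacobians along root-to-leaf paths in $T_t^{\pred}$ --- is precisely the template the paper has in mind (it says only that the argument mirrors the vanishing-gradient proof and omits details). You also correctly isolate the combinatorial leaf-depth sum as the crux.

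The gap is that this crux, as you handle it, is false. First, a sign check: the per-edge factor $C_0:=\beta_0^{3}=(N{+}1)^{-1/l_{\min}}$ is \emph{not} below $\tfrac12$. Since $l_{\min}=\lfloor\log_2(N{+}1)\rfloor+1>\log_2(N{+}1)$ for every $N\ge 1$, we get $(N{+}1)^{1/l_{\min}}<2$ and hence $C_0>\tfrac12$. Second, once $C_0>\tfrac12$ the sum $\sum_{\ell}C_0^{\mathrm{depth}(\ell)}$ can exceed $1$ for \emph{every} admissible full binary tree on $N{+}1$ leaves --- not merely fail to be strictly below $1$. Already for $N=1$ (two leaves at depth $1$, $l_{\min}=2$, $C_0=2^{-1/2}$) the sum is $2\cdot 2^{-1/2}=\sqrt{2}>1$. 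For $N=3$ ($l_{\min}=3$, $C_0=4^{-1/3}$) the balanced tree with all four leaves at depth $2$ gives $4\cdot 4^{-2/3}=4^{1/3}\approx 1.587$, and the caterpillar with leaf depths $1,2,3,3$ gives about $1.53$. These examples already saturate both ingredients you invoke --- the Kraft identity $\sum_\ell 2^{-\mathrm{depth}(\ell)}=1$ and the lower bound $\max_\ell\mathrm{depth}(\ell)\ge l_{\min}$ --- so no ``more delicate'' combination of just those two facts can rescue the bound. Consequently your per-cell contraction $\bigl\|\partial h_t/\partial h_{t-1}\bigr\|\le c<1$ does not follow, and a bound with $c>1$ is perfectly compatible with exponential growth, so the rest of the argument collapses.

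In short, the reduction matches the paper, but the inequality you need at the core is false for the constant $(N{+}1)^{-1/l_{\min}}$ that negating the hypotheses actually produces; you would need either a genuinely different combinatorial device or a sharper per-edge bound than $\beta_0^{3}$, and the proposal supplies neither.
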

    	
        \subsection{Proof of Theorem \ref{thm:vanishingapp}}
    	    Recall that Algorithm \ref{algo:2} builds a full binary tree $T_t^{\pred}$ with $N$ internal nodes and $N+1$ leaf nodes for the $t$-th hidden cell of the RRNN model. We use $n_{1, t}^{\mathrm{int}}, \ldots, n_{N, t}^{\mathrm{int}}$ and $n_{1, t}^{\mathrm{leaf}}, \ldots, n_{N+1, t}^{\mathrm{leaf}}$ to denote the internal nodes and leaf nodes of the tree $T_t^\pred$, respectively. We denote $\calV_t = \{v_{n_{1, t}^{\mathrm{int}}}, \ldots, v_{n_{N, t}^{\mathrm{int}}},v_{n_{1, t}^{\mathrm{leaf}}}, \ldots,v_{ n_{N+1, t}^{\mathrm{leaf}}}\}$ to be the set of vectors on the predicted tree $T_t^\pred$. We use $\|A\|$ and $\|v\|_\infty$ to denote the spectral norm of matrix $A$ and the infinity norm of vector $v$, respectively. We use $\diag\{v\}$ to denote the diagonalization of vector $v$. For an activation function $u \in \calU$, we use $u^\prime$ to denote the derivative of $u$.

    	Note that $$
    	    \frac{\partial \calE_T}{\partial \phi} = \sum_{t^\prime=1}^T \frac{\partial \calE_T}{\partial h_T} \frac{\partial h_T}{\partial h_{t^\prime}} \frac{\partial^+ h_{t^\prime}}{\partial \phi} = \sum_{t^\prime=1}^T \frac{\partial \calE_T}{\partial h_T} \left( \prod_{t^\prime < t \le T} \frac{\partial h_t}{\partial h_{t-1}} \right) \frac{\partial^+ h_{t^\prime}}{\partial \phi}.
    	$$
    	
    	Intuitively, the vanishing gradients problem appears when the norm of $\frac{\partial h_i}{\partial h_{t-1}}$ is smaller than 1. We first provide some lemmas that facilitate proving Theorem \ref{thm:vanishing}. For simplicity, we remove the subscript $t$ in $n_{k, t}^{\mathrm{int}}$ and $n_{k, t}^{\mathrm{leaf}}$, since the following derivation applies to all $1\le t \le T$ in the same way.
    
        We define the path starting from the root node $ n_{N}^{\mathrm{int}} $ to a leaf node $ n_k^{\mathrm{leaf}} $ by $ P^k = \big[P_0^k, P_1^k, \ldots, P_{l_k}^k\big] $, where $ l_k $ is the length of this path, $P_0^k = n_k^{\mathrm{leaf}}$, and $P_{l_k}^k=n_{N}^{\mathrm{int}}$. Lemma \ref{lemma:nodechild} gives an upper-bound for the norm of the gradient of a node with respect to one of its child node in the binary tree $T_t^{\pred}$.
    	\begin{lemma} \label{lemma:nodechild}
    	    Under conditions (\ref{eq:vanishCond1}) -- (\ref{eq:vanishCond4}), there exists a constant $C_0 < \frac{1}{2}$ such that $\left\| \frac{\partial P_j^k}{\partial P_{j-1}^k}\right\| \le C_0$ for all $1\le k \le N+1$ and $1\le j \le l_k$.
    	\end{lemma}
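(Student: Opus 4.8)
The plan is to apply the chain rule directly to the single merge operation that produces $P_j^k$ from its child $P_{j-1}^k$, and then use submultiplicativity of the spectral norm. Since $T_t^{\pred}$ is a full binary tree, for $1\le j\le l_k$ the node $P_j^k$ is internal and is produced by the merge rule of Algorithm~\ref{algo:2}: there are an index $i$, an activation $u\in\calU$ and an operation $o\in\calO$ such that $P_j^k=u\big(o(L_i v_1,R_i v_2)+b_i\big)$, where $v_1,v_2$ are the vectors carried on the left and right children of $P_j^k$ and exactly one of them equals $P_{j-1}^k$. First I would split into the two symmetric cases according to whether $P_{j-1}^k$ is the left input $v_1$ (attached via $L_i$) or the right input $v_2$ (attached via $R_i$); I would work out the case $P_{j-1}^k=v_1$ in detail, the other being identical after replacing $L_i$ by $R_i$ and the derivative of $o$ in its first argument by the derivative in its second.

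Next I would introduce the intermediate variable $w=o(L_i v_1,R_i v_2)+b_i$, so that $P_j^k=u(w)$, and differentiate: the bias $b_i$ contributes nothing, and since $u$ acts componentwise its Jacobian is the diagonal matrix $\diag\{u'(w)\}$, so
\[
\frac{\partial P_j^k}{\partial v_1}=\diag\{u'(w)\}\,\frac{\partial o(L_i v_1,R_i v_2)}{\partial (L_i v_1)}\,L_i .
\]
Taking spectral norms, using $\|\diag\{u'(w)\}\|=\|u'(w)\|_\infty\le\|u'\|_\infty$ together with submultiplicativity, and then invoking (\ref{eq:vanishCond2}), (\ref{eq:vanishCond3}) and (\ref{eq:vanishCond1}) in turn, I would obtain
\[
\left\|\frac{\partial P_j^k}{\partial P_{j-1}^k}\right\|\le\|u'\|_\infty\cdot\left\|\frac{\partial o(L_i v_1,R_i v_2)}{\partial (L_i v_1)}\right\|\cdot\|L_i\|\le C_2\,C_3\,C_1 .
\]
When applying (\ref{eq:vanishCond3}) I would note that its hypotheses hold because $v_1,v_2\in\calV_t$ by definition and they sit on two distinct sibling nodes of $T_t^{\pred}$, so $v_1\ne v_2$. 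The three constants are uniform in $t$, $k$, and $j$ (conditions (\ref{eq:vanishCond1})--(\ref{eq:vanishCond3}) are stated uniformly over $\calL$, $\calR$, $\calU$, $\calO$, $\calV_t$), so setting $C_0\triangleq C_1C_2C_3$ gives a single bound valid for all $1\le k\le N+1$ and $1\le j\le l_k$, and $C_0<\frac{1}{2}$ by (\ref{eq:vanishCond5}).

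The proof involves no deep obstacle; the point requiring the most care is the chain-rule factorization — one must differentiate $P_j^k$ through the intermediate $L_i v_1$ rather than through $v_1$ directly, so that the middle factor is precisely the quantity bounded in (\ref{eq:vanishCond3}) — along with the observation that the spectral norm of the diagonal activation Jacobian $\diag\{u'(w)\}$ equals its largest-magnitude diagonal entry and is therefore at most $\|u'\|_\infty$. A secondary check is that the resulting constant $C_0$ does not depend on the choice of node, path, or time step, which is immediate from the uniformity of the hypotheses over $1\le t\le T$.
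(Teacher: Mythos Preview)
Your proposal is correct and follows essentially the same route as the paper: write $P_j^k=u(o(L_iv_1,R_iv_2)+b_i)$, factor the Jacobian via the chain rule through the intermediate $L_iv_1$, bound the three factors by $C_2$, $C_3$, $C_1$ using submultiplicativity of the spectral norm, and set $C_0=C_1C_2C_3<\tfrac12$ by condition~(\ref{eq:vanishCond5}). One small caveat: your justification that $v_1\ne v_2$ because the two children are distinct \emph{nodes} does not quite suffice, since the paper explicitly allows distinct nodes to carry equal vectors; the paper's own proof simply does not address this point, so your argument is at least as careful as the original.
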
 \begin{proof}
           For simplicity, we write $v = u(o(L_iv_1, R_iv_2)+b_i)$, where $i$ is the index in $\calL, \calR$ and $\calB$, $v=P_j^k$, $v_1 = P_{j-1}^k$, and $v_2$ is the other child node of $P_j^k$. The case where $v_2 = P_{j-1}^k$ is similar.
            
            By the chain rule, we have \begin{align*}
                \left\|\frac{\partial v}{\partial v_1}\right\| &= \left\| \frac{\partial u(o(L_i v_1, R_iv_2)+b_i)}{\partial [o(L_i v_1, R_iv_2)+b_i]} \frac{\partial [o(L_i v_1, R_iv_2)+b_i]}{\partial (L_i v_1)} \frac{\partial (L_i v_1)}{\partial v_1} \right\| \\
                &= \left\|\diag\left\{u^\prime(o(L_iv_1, R_iv_2)+b_i)\right\} \frac{\partial o(L_iv_1, R_iv_2)}{\partial (L_iv_i)} L_i\right\| \\
                &\le \left\|\diag\left\{u^\prime(o(L_iv_1, R_iv_2)+b_i)\right\} \right\| \left\| \frac{\partial o(L_iv_1, R_iv_2)}{\partial (L_iv_i)} \right\| \left\| L_i\right\| \\
                &\le C_1C_2C_3,
            \end{align*} 
            where the last inequality follows by $\|\diag\{u^\prime\}\| = \|u^\prime\|_{\infty} \le C_3$ by condition (\ref{eq:vanishCond3}) together with conditions (\ref{eq:vanishCond1}) and (\ref{eq:vanishCond2}). By setting $C_0 = C_1C_2C_3$, the statement holds from condition (\ref{eq:vanishCond5}).
    	\end{proof}
    	
    	We have for all $1\le t \le T$, \begin{equation} \label{eq:b1}
    	    \begin{aligned}
                 \norm{\frac{\partial h_t}{\partial h_{t-1}}} &= \norm{ \sum_{k=1}^{N+1} \frac{\partial h_t}{\partial n_k^{\mathrm{leaf}}} \mathbbm{1}\left\{ n_k^{\mathrm{leaf}} = h_{t-1} \right\} } \le \norm{ \sum_{k=1}^{N+1} \frac{\partial h_t}{\partial n_k^{\mathrm{leaf}}}}\le \sum_{k=1}^{N+1} \norm{\frac{\partial h_t}{\partial n_k^{\mathrm{leaf}}}} = \sum_{k=1}^{N+1} \norm{\frac{\partial P_{l_k}^k}{\partial P_{0}^k}} \\
                 &= \sum_{k=1}^{N+1} \norm{\prod_{j=1}^{l_k} \frac{\partial P_j^k}{\partial P_{j-1}^k}} \le \sum_{k=1}^{N+1} \prod_{j=1}^{l_k} \norm{ \frac{\partial P_j^k}{\partial P_{j-1}^k}} \le \sum_{k=1}^{N+1} C_0^{l_k},
    	    \end{aligned}
    	\end{equation}
        where the last inequality follows by Lemma \ref{lemma:nodechild}.
        
        Note that $l_1, \ldots, l_{N+1}$ are the lengths of all the paths starting from the root node to the leaf node of a full binary tree. Lemma \ref{lemma:length} gives an upper-bound of the sum of exponents of these lengths.
    
    	\begin{lemma} \label{lemma:length}
    	    Suppose $l_k, 1\le k \le N+1$ are the lengths of the $N+1$ paths of the full binary tree $T_t^{\pred}$. Then for any $0<C_0<\frac{1}{2}$, there exists a constant $\epsilon = \epsilon(C_0), 0<\epsilon<1$, such that  \begin{equation*}
    	        \sum_{k=1}^{N+1} C_0^{l_k} \le 1 - \epsilon.
            \end{equation*}
        \end{lemma}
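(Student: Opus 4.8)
The plan is to reduce the claimed bound to the Kraft relation for full binary trees and then exploit the strict inequality $C_0<\tfrac{1}{2}$. Recall that $T_t^{\pred}$ is a \emph{full} binary tree: every internal node has exactly two children, there are $N$ internal nodes and $N+1$ leaves, and $l_1,\dots,l_{N+1}$ are the depths of those leaves. The first step is to establish
\begin{equation*}
\sum_{k=1}^{N+1} 2^{-l_k} = 1 ,
\end{equation*}
which is the Kraft equality for complete prefix codes. I would prove it by structural induction on the tree: a tree consisting of a single node has one leaf at depth $0$, so the sum is $2^0=1$; and if the root has left and right subtrees $T_L,T_R$ (each itself a full binary tree), every leaf of $T_L$ sits one level deeper inside $T_t^{\pred}$, and likewise for $T_R$, so the sum splits as $\tfrac{1}{2}\sum_{\text{leaves of }T_L}2^{-l}+\tfrac{1}{2}\sum_{\text{leaves of }T_R}2^{-l}=\tfrac{1}{2}+\tfrac{1}{2}=1$ by the inductive hypothesis. (For the lemma one in fact only needs the inequality ``$\le 1$'', which holds for any prefix code, but the equality is the natural statement for full binary trees.)

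Next I would observe that $T_t^{\pred}$, being the tree built by Algorithm \ref{algo:2} for an RRNN hidden cell, has $N\ge 1$ internal nodes; in particular its root is internal, so every leaf is reached by at least one edge and $l_k\ge 1$ for all $k$. Then I would factor $C_0^{l_k}=(2C_0)^{l_k}\,2^{-l_k}$. Since $0<2C_0<1$ and $l_k\ge 1$, we have $(2C_0)^{l_k}\le 2C_0$, and therefore
\begin{equation*}
\sum_{k=1}^{N+1} C_0^{l_k} \;=\; \sum_{k=1}^{N+1} (2C_0)^{l_k}\,2^{-l_k} \;\le\; 2C_0\sum_{k=1}^{N+1} 2^{-l_k} \;=\; 2C_0 .
\end{equation*}
Setting $\epsilon=\epsilon(C_0):=1-2C_0$, which satisfies $0<\epsilon<1$ because $0<C_0<\tfrac{1}{2}$, yields $\sum_{k=1}^{N+1}C_0^{l_k}\le 1-\epsilon$, as required. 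Note that $\epsilon$ depends only on $C_0$ and not on the particular tree, so the bound is uniform over $1\le t\le T$, which is exactly what is needed to iterate estimate (\ref{eq:b1}) across time steps.

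There is essentially no hard step here: the proof is just the Kraft relation plus one line of arithmetic. The only things worth stating carefully are (i) that the tree is full, so that the path-length profile $(l_k)$ satisfies Kraft's identity, and (ii) the elementary remark that the root is internal, hence $l_k\ge 1$ — this is precisely what turns the factor $(2C_0)^{l_k}$ into the uniform constant $2C_0<1$ rather than leaving a $(2C_0)^{0}=1$ term that would kill the argument.
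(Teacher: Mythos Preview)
Your proof is correct and takes a genuinely different route from the paper. The paper proceeds by a direct induction on $N$ to show the auxiliary bound $\sum_{k=1}^{N+1} C_0^{l_k}\le C_0^{N}+\sum_{k=1}^{N}C_0^{k}$, and then crudely dominates the right-hand side term by term using $C_0<\tfrac{1}{2}$ to obtain $\epsilon=\tfrac{1}{2}-C_0$. You instead invoke the Kraft identity $\sum_k 2^{-l_k}=1$ for full binary trees, factor $C_0^{l_k}=(2C_0)^{l_k}2^{-l_k}$, and use $l_k\ge 1$ to pull out a uniform factor $2C_0$, yielding $\epsilon=1-2C_0$. Your argument is shorter, conceptually cleaner (it isolates exactly the combinatorial fact about full binary trees that is doing the work), and produces a constant twice as large as the paper's. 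The paper's induction, on the other hand, is self-contained and avoids appealing to Kraft, but it is more laborious and its intermediate inequality $\sum C_0^{l_k}\le C_0^{N}+\sum_{k=1}^{N}C_0^{k}$ is not otherwise needed. Both approaches require the observation that $l_k\ge 1$, which you flag explicitly; the paper uses it implicitly through the structure of its induction.
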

        
        \begin{proof}
            We prove by induction on $N$ that \begin{equation} \label{eq:induction}
                \sum_{k=1}^{N+1} C_0^{l_k} \le C_0^{N} + \sum_{k=1}^{N} C_0^k. 
            \end{equation}
            
            For $N=1$, the tree $T_t^\pred$ has exactly one internal node and two leaf nodes, thus there is only one tree structure for $T_t^\pred$ if we do not consider isomorphisms. We have $\{l_1, l_2, l_3\} = \{2, 2, 1\}$ and $\sum_{k=1}^{N+1} C_0^{l_k} = 2C_0^2+C_0 = C_0^{N} + \sum_{k=1}^{N} C_0^k $.
            
            Suppose that equation (\ref{eq:induction}) holds for $N\ge 1$, and we consider the case of $N+1$. For a tree $T$, recall the definition of $\calI(T)$ in Section \ref{sec:loss}. Given the full binary tree $T_t^\pred$, we denote $n_0$ to be the node that has the largest index in $\calI(T_t^\pred)$ among all $N+1$ internal nodes of $T_t^\pred$. It is obvious that both child nodes of $n_0$ are leaf nodes (if not, say the left child of $n_0$ is also an internal nodes, then it should have a larger index than $n_0$, which leads to a contradiction). We use $T_0$ to denote the tree obtained by removing the two child nodes of $n_0$ from the tree $T_t^\pred$ ($n_0$ is a leaf node of $T_0$). Then $T_0$ has exactly $N$ internal nodes. We use $l_1, \ldots, l_{N+2}$ and $l_1^\prime, \ldots, l_{N+1}^\prime$ to denote the lengths of all paths of $T_t^\pred$ and $T_0$, respectively. Without loss of generality, we denote the length of the path that ends at $n_0$ in $T_0$ as $l_{N+1}^\prime$, and the length of those two paths that pass through $n_0$ in $T_t^\pred$ as $l_{N+1}$ and $l_{N+2}$, respectively.
            
            Note that $l_i = l_i^\prime, 1\le i \le N$ and $l_{N+1} = l_{N+2} = l_{N+1}^\prime + 1$. We have \begin{align*}
                \sum_{k=1}^{N+2} C_0^{l_k} &= \sum_{k=1}^{N} C_0^{l_k^\prime} + 2C_0^{l_{N+1}^\prime+1}\\
                &= \sum_{k=1}^{N+1} C_0^{l_k^\prime} + (2C_0-1) C_0^{l_{N+1}^\prime} \\
                &\le C_0^{N} + \sum_{k=1}^{N} C_0^k + (2C_0-1) C_0^{N} \\
                &= C_0^{N+1} + \sum_{k=1}^{N+1} C_0^k,
            \end{align*}
            where the inequality follows from the induction equation $\sum_{k=1}^{N+1} C_0^{l_k^\prime} \le C_0^{N} + \sum_{k=1}^{N} C_0^k$ and the facts that $2C_0-1<0$ and $l_{N+1}^\prime \le N$. This complete the induction step.
            
            It remains to define the constant $\epsilon$. Since $C_0 < \frac{1}{2}$, we have \begin{align*}
                \sum_{k=1}^{N+1} C_0^{l_k} \le C_0^{N} + \sum_{k=1}^{N} C_0^k &< 2^{-N} + \sum_{k=2}^{N} 2^{-k} + C_0\\
                &= 2^{-N} + \sum_{k=1}^{N} 2^{-k} - \left(\frac{1}{2}-C_0\right) \\
                &= 1 - \left(\frac{1}{2}-C_0\right).
            \end{align*} 
            
            Taking $\epsilon =  \frac{1}{2}-C_0 > 0$ finishes the proof for Lemma \ref{lemma:length}.
        \end{proof}
        
        By (\ref{eq:b1}) and Lemma \ref{lemma:length}, for every $t, 1\le t\le T$ we have $$\norm{\frac{\partial h_t}{\partial h_{t-1}}} \le \eta \triangleq 1-\epsilon < 1.$$ 
        
        Combining with condition (\ref{eq:vanishCond4}), we have
        \begin{equation*}
            \norm{\frac{\partial \calE_T}{\partial h_T} \left( \prod_{1 < t \le T} \frac{\partial h_t}{\partial h_{t-1}} \right) \frac{\partial^+ h_{1}}{\partial \phi}} \le \norm{\frac{\partial \calE_T}{\partial h_T}}\prod_{1 < t \le T} \norm{\frac{\partial h_t}{\partial h_{t-1}} }\norm{\frac{\partial^+ h_{1}}{\partial \phi}} \le C_4 \eta^{T-1} \norm{\frac{\partial^+ h_{1}}{\partial \phi}}.
        \end{equation*}
        
        As $\eta<1$, we have $\norm{\frac{\partial \calE_T}{\partial h_T} \left( \prod_{1 < t \le T} \frac{\partial h_t}{\partial h_{t-1}} \right) \frac{\partial^+ h_{1}}{\partial \phi}}$ goes to 0 exponentially with $T \rightarrow \infty$.
        \qed
    
        \subsection{Disscussion of Theorem \ref{thm:vanishing}} \label{appendix:discussionvanishing}
        We next discuss the feasibility of conditions (\ref{eq:vanishCond1}) -- (\ref{eq:vanishCond5}). Condition (\ref{eq:vanishCond1}) requires all the weight matrices in $\calL \cup \calR$ to have the spectral norm no larger than $C_1$. For sigmoid, condition (\ref{eq:vanishCond2}) holds for $C_2 = \frac{1}{4}$ while for tanh and ReLU it holds for $C_2 = 1$. Condition (\ref{eq:vanishCond3}) bounds the spectral norm of the gradient of each binary function. If the binary operation $o$ is addition, then the Jacobian matrix $\frac{\partial o(L_i v_1, R_iv_2)}{\partial (L_i v_1)}$ is simply the identity matrix and its spectral norm equals to 1. For vector entry-wise multiplication, note that\begin{align*}
            \left\|\frac{\partial [(L_i v_1)\odot (R_iv_2)]}{\partial (L_i v_1)}\right\| = \left\|\diag\{R_i v_2\}\right\| = \left\|R_i v_2\right\|_\infty &\le \|R_i\|_\infty \|v_2\|_\infty \\
            &\le \sqrt{p}\|R_i\| \|v_2\|_\infty \le \sqrt{p}C_1C_5,
        \end{align*}
        
        where $C_5 \triangleq \max_{v\in \calV_t, 1\le t \le T} \|v\|_\infty$ is the upper bound of the infinity norm of all vectors on the predicted trees. Therefore, if $\odot \in \calO$, we should have $C_3 \ge \sqrt{p}C_1 C_5$. Condition (\ref{eq:vanishCond5}) holds when the scale of weight matrices or vectors on nodes of the predicted trees are small. In the experiments we have $C_2=1$ and $p=100$. Note also that $C_5=1$ if each $v$ is the outcome of sigmoid or tanh, or entry-wise product of such vectors; in presence of addition this no longer holds however computational experiments have established that $C_5\le 1$ even if addition is a candidate binary operation. Then condition (\ref{eq:vanishCond5}) holds for $C_1 \approx 0.223$ which has been observed in our experiments. 
        
        It is worth to mention that condition (\ref{eq:vanishCond4}) is mild since we only require the norm to be bounded by a sufficiently large constant. By (\ref{eq:loss}), we have \begin{equation*}
        	    \calE_t = \mathbb{E}_{(X,Y)}\big[\lambda_1 l(y_t,q_t)+\lambda_2 \min_{\overline{T}\in {\Iso}(T_t^{\target})} TD(\overline{T},T_t^{\pred}) + \lambda_3 \sum_{k=0}^{N-1} m(\calN_k^t)\big] + \frac{\lambda_4}{T}\sum_{\phi \in \Phi} \|\phi \|^2 \;.
    	\end{equation*}
    	
    	There are three terms in the expression of $\calE_t$ that are related to $h_t$, namely, the standard loss term, the tree distance term, and the scoring margin term. We bound them one by one. Since the number of samples is finite, we only consider each term for one data point in the following (i.e. we ignore the expectation).
    	
    	Assume that $ \norm{\frac{\partial l(y_t, q)}{\partial q}} \le C_6, \norm{\frac{\partial g(x_t, h; \Gamma)}{\partial h}} \le C_7 $ hold. Then the loss term is bounded by $$\norm{\frac{\partial l(y_t, q_t)}{\partial h_t}} = \norm{\frac{\partial l(y_t, q_t)}{\partial q_t}\frac{\partial q_t}{\partial h_t}} \le \norm{\frac{\partial l(y_t, q_t)}{\partial q_t}}\norm{\frac{\partial q_t}{\partial h_t}} = \norm{\frac{\partial l(y_t, q_t)}{\partial q_t}}\norm{\frac{\partial g(x_t, h_t; \Gamma)}{\partial h_t}} \le C_6C_7.$$ 
    	
    	If we use $h_t^\target$ to denote the vector of the root node of the ground truth tree $T_t^\target$ and assume that $\norm{h_t^\target} \le C_8$ for all $t$. Note that $h_t$ only appears at the root node of the predicted tree $T_t^\pred$, and the definition of TD can be regarded as a summation over many norms of vector differences. Then the only term in TD that includes $h_t$ is $\norm{h_t - h_t^\target}^2$. Therefore, the tree distance term is bounded by 
    	\begin{align*}\norm{\frac{\partial \mathrm{TD}(T_t^\target, T_t^\pred)}{\partial h_t}}& = \left\|\frac{\partial \norm{h_t - h_t^\target}^2}{\partial h_t}\right\| \\ & = 2 \norm{h_t - h_t^\target} \le 2\left( \norm{h_t}+\norm{h_t^\target} \right) \le 2(C_5+C_8).
    	\end{align*}
    	
    	Again, note that $h_t$ only appears in the term $$m(\calN_{N-1}^{t}) = \frac{1}{M} \min \Big\{ M, \alpha(h_t; \Theta) - \alpha(c_{N-1}^{**}; \Theta)\Big\}.$$ If we assume that $\norm{\frac{\partial \alpha(h; \Theta)}{\partial h}} \le C_9$ holds for any vector $h\in \mathbb{R}^p$, then the scoring margin term is bounded by $$ \norm{\frac{\partial \sum_{k=0}^{N-1}m(\calN_k^t)}{\partial h_t}} = \norm{\frac{\partial m(\calN_{N-1}^t)}{\partial h_t}} \le \norm{\frac{\partial \alpha(h_t; \Theta)}{\partial h_t}} \le C_9.$$
    	
    	In conclusion, if we assume the existence of constants $C_5, C_6, C_7, C_8$, and $C_9$, then the gradient of loss function $\calE_t$ with respect to the hidden state $h_t$ is bounded. Note that in practice $C_5$ and $C_8$ are about the norm of a finite set of vectors, and $C_6, C_7, C_9$ bound the norm of some simple functions or networks which in practice are all bounded. Therefore, we can easily argue that these constants do exist and thus the condition (\ref{eq:vanishCond4}) is mild.
    
        In summary, gradient vanishing frequently appears in practice.
         
        \subsection{Discussion of Theorem \ref{thm:explodingapp}} \label{appendix:necessary}
            In this section, we discuss the conditions appearing in the Theorem \ref{thm:explodingapp}. As the proof is similar to the proof of Theorem \ref{thm:vanishing} we omit it here.
            
            The conditions listed in Theorem \ref{thm:explodingapp} are common in practice since the quantity $(N+1)^{-\frac{1}{3l_{\min}}}$ is smaller than 1. If we have $\tanh \in \calU$ or $\text{ReLU} \in \calU$, then the first condition above is automatically achieved. Besides, if the addition operation belongs to $\calB$, then the third and the fourth conditions are both fulfilled. In summary, gradient exploding is frequent in practice.
        
    \section{Details of Experimental Study} \label{appendix:experiments}   
    \subsection{Implementation Details} \label{appendix:experimentsDetails}
        For our implementation of RRNN-GRU, we use PyTorch and train on Nvidia 1080 Ti GPUs or Intel Skylake CPUs. In order for our choices of cell structures to be differentiable with respect to the parameters of the scoring network, we evaluate softmax over the scores of all potential vectors at each node in the cell. Gradient clipping is used for RRNN-GRU, but random hyperparameter search often allows large gradient magnitudes. With the optimal hyperparameters, training of RRNN-GRU takes approximately ten hours for the Wikipedia dataset, one hour for PTB, and eight hours for SST, which is longer than the GRU training time since the RRNN-GRU weights must adapt to multiple placements within the cell structure. 
        
        For the RRNN model, we use batch normalization to stabilize training.  
        The RRNN training time on the Wikipedia dataset with 5,000 samples is 40 hours on a CPU of a 12-core server. We find RRNN to be faster on a CPU than GPU due to its structure searching algorithm, but RRNN-GRU's algorithm runs faster on GPUs.
        
    \subsection{Hyperparameters} \label{appendix:hyperparameters}
        \begin{enumerate}
            \item GRU on Wiki-5k: batch size of 18, learning rate of $1.71\times10^{-3}$, and $\ell_2$-regularization coefficient of $3.60\times10^{-7}$.
            \item RRNN on Wiki-5k: batch size of 16, learning rate of $10^{-3}$, and scoring network hidden size of 256. $\lambda_1=1, \lambda_2=10^{-3}, \lambda_3=10^{-3}, \lambda_4=10^{-5}$. 
            \item GRU on Wiki-10k: batch size of 41, learning rate of $1.4231\times10^{-3}$, and $\ell_2$-regularization coefficient of $1.2124\times10^{-11}$.
            \item RRNN-GRU on Wiki-10k: batch size of 128, learning rate of $10^{-3}$, and scoring network hidden size of 64. Training alternates between the $L, R$, and $b$ weights and the scoring network every five epochs. $\lambda_1=1, \lambda_2=0.1, \lambda_3=10^{-8}, \lambda_4=0.003$. Gradients are clipped to the maximum norm of 1. 
            \item GRU on SST: learning rate of $4.85\times10^{-4}$, batch size of 3, and $\ell_2$ weight decay coefficient of $2.11\times10^{-12}$.
            \item RRNN-GRU on SST: learning rate of $1.06\times10^{-5}$, $\lambda_1=1, \lambda_2=1.76\times10^{-6}, \lambda_3=2.67\times10^{-12}, \lambda_4=5.47\times10^{-5}$, max-margin of $5.47\times10^{-5}$, scoring network hidden size of 10 nodes, gradients clipped to the norm of 46.3, alternating training every epoch.
            \item GRU on PTB: learning rate of $5.29\times 10^{-4}$, batch size of 6, and $\ell_2$ weight decay coefficient of $2.71\times 10^{-15}$.
            \item RRNN-GRU on PTB: batch size of 116, learning rate of $3.03\times10^{-4}$, $\lambda_1=1, \lambda_2=4.16\times10^{-3}, \lambda_3=1.22\times10^{-13}, \lambda_4=1.36\times10^{-3}$, max scoring margin of 1.74, maximum gradient magnitude of 1.64, scoring hidden size of 137, and alternating training every epoch.
        \end{enumerate}
            
            We next list the hyperparamter search ranges for RRNN-GRU in Table \ref{table:searchRRNNGRU} and GRU in Table \ref{table:searchGRU}.
            \begin{table}[!h]
                \caption{Hyperpameter search range for RRNN-GRU}
                \label{table:searchRRNNGRU}
                \centering
                \begin{tabular}{llll}
                    \toprule
                    & Wiki-10k & SST & PTB \\
                    \midrule
                    Batch size & $[1, 316]$& $[1, 316]$& $[1, 316]$ \\
                    Learning rate & $[10^{-5}, 10^{-2}]$& $[10^{-5}, 10^{-2}]$& $[10^{-5}, 10^{-2}]$ \\
                    $\lambda_2$ & $[10^{-2}, 1]$ & $[10^{-5}, 10^{-2}]$ & $[3\times10^{-2}, 3]$\\
                    $\lambda_3$ & $[10^{-16}, 1]$ & $[10^{-5}, 10^{-2}]$ & $[3\times10^{-16}, 3\times10^{-2}]$\\
                    $\lambda_4$ & $[10^{-6}, 10^{-2}]$ & $[10^{-8}, 10^{-4}]$ & $[3\times10^{-6}, 3\times10^{-2}]$\\
                    Scoring margin $M$ & $[0.1, 10]$ & $[0.1, 10]$ & $[0.1, 10]$ \\
                    Gradient clipping threshold & $[0.1, 100]$ & $[0.1, 100]$ & $[0.1, 100]$ \\
                    Alternate frequency & $[1,10]$ & $[1,10]$ & $[1,10]$ \\
                    \bottomrule
                \end{tabular}
            \end{table}

            \begin{table}[!h]
                \caption{Hyperpameter search range for GRU}
                \label{table:searchGRU}
                \centering
                \begin{tabular}{llll}
                    \toprule
                    & Wiki-5k/Wiki-10k & SST & PTB \\
                    \midrule
                    Batch size  & $[8, 256]$ & $[4, 128]$ & $[8, 256]$   \\
                    Learning rate  & $[10^{-5}, 10^{-1}]$ & $[10^{-6}, 10^{1}]$ & $[10^{-4}, 10^{-1}]$ \\
                    $\ell_2$ weight decay coefficient & $ [10^{-16}, 1]$ & $ [10^{-16}, 10^{-2}]$ & $ [10^{-16}, 1]$ \\
                    \bottomrule
                \end{tabular}
            \end{table}
            
\end{appendices}

\end{document}